\documentclass{article}

\usepackage{microtype}
\usepackage{graphicx}
\usepackage{subfigure}
\usepackage{booktabs} % for professional tables
\usepackage{multirow}
\usepackage{enumitem}
\usepackage[most]{tcolorbox}
\usepackage{bbm}
\usepackage{hyperref}

\usepackage[preprint]{icml2026}
\usepackage{amsmath}
\usepackage{amssymb}
\usepackage{mathtools}
\usepackage{amsthm}
\DeclareMathOperator*{\argmax}{arg\,max}
\usepackage[capitalize,noabbrev]{cleveref}

\theoremstyle{plain}
\newtheorem{theorem}{Theorem}[section]
\newtheorem{proposition}[theorem]{Proposition}

\newtheorem{corollary}[theorem]{Corollary}
\theoremstyle{definition}
\newtheorem{definition}[theorem]{Definition}
\newtheorem{assumption}[theorem]{Assumption}
\theoremstyle{remark}
\newtheorem{remark}[theorem]{Remark}
\newtheorem*{theorem*}{Theorem}
\newtheorem*{proposition*}{Proposition}
\newtheorem*{lemma*}{Lemma}
\newtheorem*{corollary*}{Corollary}

\usepackage[textsize=tiny]{todonotes}

\icmltitlerunning{Why Linear Interpretability Works: Invariant Subspaces as a Result of Architectural Constraints}

\begin{document}

\twocolumn[
\icmltitle{Why Linear Interpretability Works: Invariant Subspaces as a Result of Architectural Constraints}

% It is OKAY to include author information, even for blind
% submissions: the style file will automatically remove it for you
% unless you've provided the [accepted] option to the icml2025
% package.

% List of affiliations: The first argument should be a (short)
% identifier you will use later to specify author affiliations
% Academic affiliations should list Department, University, City, Region, Country
% Industry affiliations should list Company, City, Region, Country

% You can specify symbols, otherwise they are numbered in order.
% Ideally, you should not use this facility. Affiliations will be numbered
% in order of appearance and this is the preferred way.

\begin{icmlauthorlist}
\icmlauthor{Andres Saurez}{aff1}
\icmlauthor{Yousung Lee}{aff1}
\icmlauthor{Dongsoo Har}{aff1}
\end{icmlauthorlist}

\icmlaffiliation{aff1}{Korea Advanced Institute of Science and Technology, Daejeon 34051, South Korea}

\icmlcorrespondingauthor{Dongsoo Har}{dshar@kaist.ac.kr}

% You may provide any keywords that you
% find helpful for describing your paper; these are used to populate
% the "keywords" metadata in the PDF but will not be shown in the document
\icmlkeywords{Machine Learning, ICML}

\vskip 0.3in

]
% \printAffiliationsAndNotice{\icmlPreprintNotice}
\printAffiliationsAndNotice{}

\begin{abstract}
Linear probes and sparse autoencoders consistently recover meaningful structure from transformer representations---yet why should such simple methods succeed in deep, nonlinear systems? We show this is not merely an empirical regularity but a consequence of architectural necessity: transformers communicate information through linear interfaces (attention OV circuits, unembedding matrices), and any semantic feature decoded through such an interface must occupy a context-invariant linear subspace. We formalize this as the \emph{Invariant Subspace Necessity} theorem and derive the \emph{Self-Reference Property}: tokens directly provide the geometric direction for their associated features, enabling zero-shot identification of semantic structure without labeled data or learned probes. Empirical validation in eight classification tasks and four model families confirms the alignment between class tokens and semantically related instances. Our framework provides \textbf{a principled architectural explanation} for why linear interpretability methods work, unifying linear probes and sparse autoencoders.
\end{abstract}
% this must go after the closing bracket ] following \twocolumn[ ...

% This command actually creates the footnote in the first column
% listing the affiliations and the copyright notice.
% The command takes one argument, which is text to display at the start of the footnote.
% The \icmlEqualContribution command is standard text for equal contribution.
% Remove it (just {}) if you do not need this facility.

%\printAffiliationsAndNotice{}  % leave blank if no need to mention equal contribution

\section{Introduction}

Linear structure has become a central organizing principle in modern interpretability for transformers \citep{vaswani2017attention}. 
Linear probes recover semantic attributes from hidden states \citep{alain2016understanding, belinkov-2022-probing}, 
sparse autoencoders (SAE) identify interpretable feature directions \citep{bricken2023monosemanticity, cunningham2023sparse}, 
and single-vector activation steering reliably modifies model behavior \citep{turner2023steering, zou2023representation}. 
Across a wide range of methods and objectives, simple linear operations repeatedly succeed at isolating meaningful internal structures.

\begin{figure}[t]
    \centering
    \includegraphics[width=\linewidth]{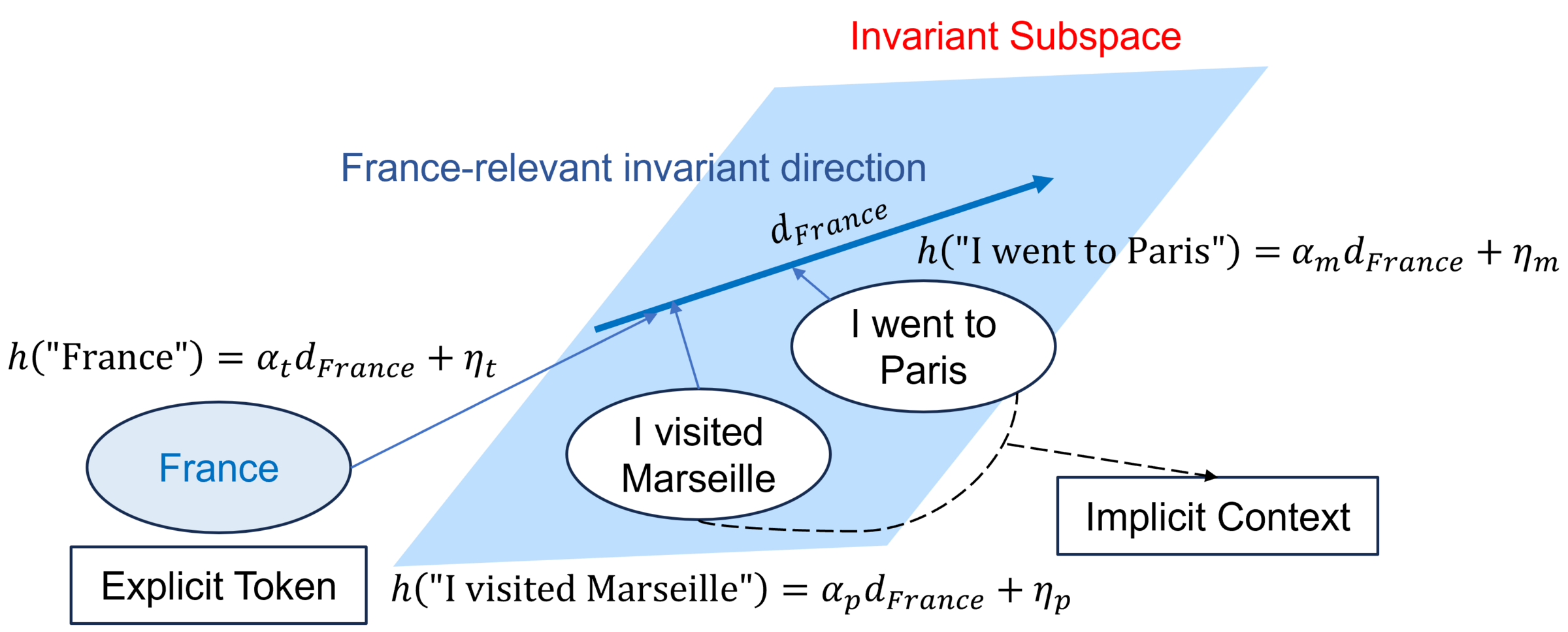}
    \caption{
    \textbf{Context-invariant directional representation.}
    The explicit token “France” provides a reference vector for this direction \textit{(self-reference)},
    while contextual mentions such as “I went to Paris” and “I visited Marseille” share the same
    invariant direction.
    \label{fig:method_overview}
    }
\end{figure}

Yet this success raises a fundamental theoretical puzzle. 
Transformers are deep, highly nonlinear systems trained with simple objectives but exhibiting complex emergent behavior. 
Why, then, should their internal representations admit such simple and reliable linear access to semantic information?
Recent theoretical work suggests that linear representations can emerge from the next-token prediction objective together with the implicit bias of gradient descent \citep{jiang2024origins}. 
We provide a complementary explanation: transformers communicate information through linear interfaces—most notably the attention OV circuit and the unembedding matrix—and any semantic feature read out through such an interface must reside in a context-invariant linear subspace. While optimization determines \emph{how} representations are learned, architecture constrains \emph{what form} they must take.

We formalize this claim in our central theoretical result, the \emph{Invariant Subspace Necessity} theorem: whenever a semantic feature is decoded through a linear interface, its representation must lie in an invariant subspace shared across all contexts expressing that feature. 
This provides a principled explanation for why linear probes, sparse autoencoders, and direction-based steering are able to recover stable semantic structure.
Crucially, if such an invariant subspace exists, how can we identify its direction in practice? 
A key derivation from it is the \emph{Self-Reference Property}: tokens and expressions that encode a feature directly provide its geometric direction in activation space. 
For example, the token ``France'' does not merely instantiate the France concept, but serves as a reference vector for locating that concept in any representation, enabling zero-shot identification of semantic directions without labeled data or learned probes.

We empirically validate these predictions across eight semantic classification tasks spanning taxonomic, affective, stylistic, linguistic, and descriptive domains, and across multiple model families (LLaMA3-8B ~\citep{grattafiori2024llama}, Mistral-7B~\citep{jiang2023mistral}, GPT2-Small~\citep{radford2019language}, LLaMA3.2-3B~\citep{meta2024llama32}).
Our results reveal consistent evidence of context-invariant directional structure, supporting the generality of our theory.

In summary, our contributions are threefold: 
(1)~We provide a principled architectural explanation for why linear 
interpretability methods succeed in transformers, showing that linear 
structure arises as a necessary consequence of linear communication 
interfaces. 
(2)~We introduce the \textit{Self-Reference Property}, 
establishing that tokens directly define directions for their associated 
features---this enables zero-shot identification of semantic directions 
and yields an unsupervised probe that classifies instances using only 
class token geometry. 
(3)~We demonstrate convergent evidence for directional invariance: 
sparse autoencoders trained without class supervision recover features 
that align with class token directions, validating that both methods 
access the same underlying structure.

% Second, we formalize this insight by proving that communicable semantic features must occupy context-invariant linear subspaces. 
\section{Related Work}

\subsection{Mechanistic Interpretability}
Mechanistic interpretability seeks to explain model behavior by identifying internal features, circuits, and causal mechanisms inside neural networks rather than relying solely on input--output correlations \citep{elhage2021mechanistic, nanda2023grokking}. 
Transformer models contain identifiable computational substructures implementing specific behaviors, and targeted interventions on internal activations can causally alter model outputs \citep{meng2022locating, wang2022causal, turner2023steering, zou2023representation}. 
Our work connects this literature to representation geometry by providing a theoretical account of the linear, context-invariant feature structure that underlies many of these observations.

\subsection{Linear Representation}
Semantic attributes can often be captured by low-dimensional linear structure in representation spaces, dating back to linear regularities in word embeddings \citep{mikolov2013linguistic, pennington2014glove}. 
In modern LMs, linear structure is operationalized as \emph{measurement} (linear probes that extract a property) and as \emph{intervention} (steering representations along a direction) \citep{kim2018tcav, ravfogel2020inlp}. 
\citet{park2024linear} formalize these intuitions, clarifying what ``linear representation'' means and how measurement and intervention relate geometrically. 
Most closely related, \citet{jiang2024origins} explain linear representations through the next-token prediction objective and implicit bias of gradient descent, whereas we give an architectural account of why linear, context-invariant directions are \emph{necessary} in models with linear communication interfaces.

\subsection{Linear Probes}
Linear probes test what information is present in hidden states \citep{alain2016understanding, hewitt2019structural, tenney2019bert}. 
A range of successful techniques instantiate this idea: the \emph{logit lens} decodes intermediate representations using the unembedding matrix, revealing that hidden states already resemble next-token distributions \citep{nostalgebraist2020}, and the tuned lens extends this with layer-specific affine transformations \citep{belrose2023tuned}. 
Relatedly, linear directions identified by probes have been used not only for measurement but also for intervention, enabling activation steering of hidden states \citep{li2023inference, rimsky2024steering}. 
Recent evaluations further show that sparse autoencoder latents do not consistently outperform simple linear baselines on probing tasks \citep{kantamneni2025sparseprobing}. 
Together, these results highlight the surprising effectiveness of linear readouts and interventions across diverse settings. 
However, they remain largely descriptive and do not explain \emph{why} such linear decodability should arise in deep nonlinear models. 
Our theory addresses this gap.

\section{Theoretical Framework: Identity-Projection}
\label{sec:theory}

We develop a geometric framework explaining how transformers encode semantic features. Our central result establishes that features communicated through linear interfaces must occupy invariant subspaces—a structural consequence of architecture—and we show that optimization dynamics favor low-dimensional realizations.

%-----------------------------------------------------------------------------
\subsection{Preliminaries and Assumptions}
%-----------------------------------------------------------------------------

We consider decoder-only transformers with the following properties:

\begin{assumption}[Architectural Requirements]
\label{ass:architecture}
Let $\mathcal{M}$ be a transformer with $L$ layers and hidden dimension $d$. We assume:
\begin{enumerate}%[label=\textbf{(A\arabic{enumi})}, leftmargin=2em]
    \item \textbf{Additive Residual Stream.} The residual stream updates additively: $\mathbf{r}^{(\ell+1)} = \mathbf{r}^{(\ell)} + \Delta^{(\ell)}$, where $\Delta^{(\ell)}$ is the output of layer $\ell$.
    
    \item \textbf{Linear Communication Interfaces.} The OV circuit in attention ($W_O W_V$) and the unembedding layer ($W_U$) act as linear maps on the residual stream.
    
    \item \textbf{Shared Parameters.} The same parameters are applied across all token positions; optimization occurs over parameters shared across contexts.
    
    \item \textbf{Linear Output Layer.} Token logits are computed via linear projection: $\text{logits} = W_U \mathbf{r}^{(L)}$.
\end{enumerate}
\end{assumption}

These assumptions hold for standard transformer architectures including GPT-2, LLaMA, and similar models.

%-----------------------------------------------------------------------------
\subsection{Formal Definitions}
%-----------------------------------------------------------------------------

\begin{definition}[Context]
\label{def:context}
A \emph{context} $c = (x_1, \ldots, x_n; i)$ consists of an input token sequence and a position of interest $i$. We write $\mathbf{h}^{(\ell)}(c) \in \mathbb{R}^d$ for the hidden state at layer $\ell$ and position $i$ in context $c$.
\end{definition}

\begin{definition}[Semantic Feature]
\label{def:feature}
A \emph{semantic feature} is a function $f: \mathcal{C} \to \mathcal{Y}$ mapping contexts to a value space $\mathcal{Y}$. For classification tasks, $\mathcal{Y} = \{y_1, \ldots, y_K\}$ is finite.
\end{definition}

\begin{definition}[Communicable Feature]
\label{def:communicable}
A semantic feature $f$ is \emph{communicable} if it satisfies both:
\begin{enumerate}%[label=(\roman*)]
    \item \textbf{Multi-context:} There exist distinct contexts $c_1, c_2$ with $f(c_1) = f(c_2)$.
    \item \textbf{Linear decodability:} The feature value is recoverable through a linear interface. For unembedding:
    \begin{equation}
        \exists\, \boldsymbol{\phi} \in \mathbb{R}^{|V|}: \quad \boldsymbol{\phi}^\top W_U \mathbf{h}(c) = g(f(c)) \quad \forall c \in \mathcal{C}
    \end{equation}
    for some function $g: \mathcal{Y} \to \mathbb{R}$.
\end{enumerate}
\end{definition}

The multi-context requirement captures that meaningful features appear across different surface realizations (e.g., ``France'' and ``the country of the Eiffel Tower'' both express the France feature). Linear decodability captures that the feature must be extractable by downstream linear operations.

\begin{definition}[Invariant Subspace]
\label{def:invariant-subspace}
A communicable feature $f$ exhibits \emph{contextual invariance} if there exists a subspace $\mathcal{S}_f \subseteq \mathbb{R}^d$, determined by $f$ and model parameters alone, such that for all contexts $c$ expressing $f$, the $f$-relevant information in $\mathbf{h}(c)$ lies entirely within $\mathcal{S}_f$.
\end{definition}

\begin{definition}[Directional Invariance]
\label{def:directional}
A communicable feature $f$ exhibits \emph{directional invariance} if it exhibits contextual invariance with $\dim(\mathcal{S}_f) = 1$, i.e., $\mathcal{S}_f = \text{span}(\{\mathbf{d}_f\})$ for some direction $\mathbf{d}_f \in \mathbb{R}^d$.
\end{definition}

\begin{figure}
    \centering
    \includegraphics[width=1\linewidth]{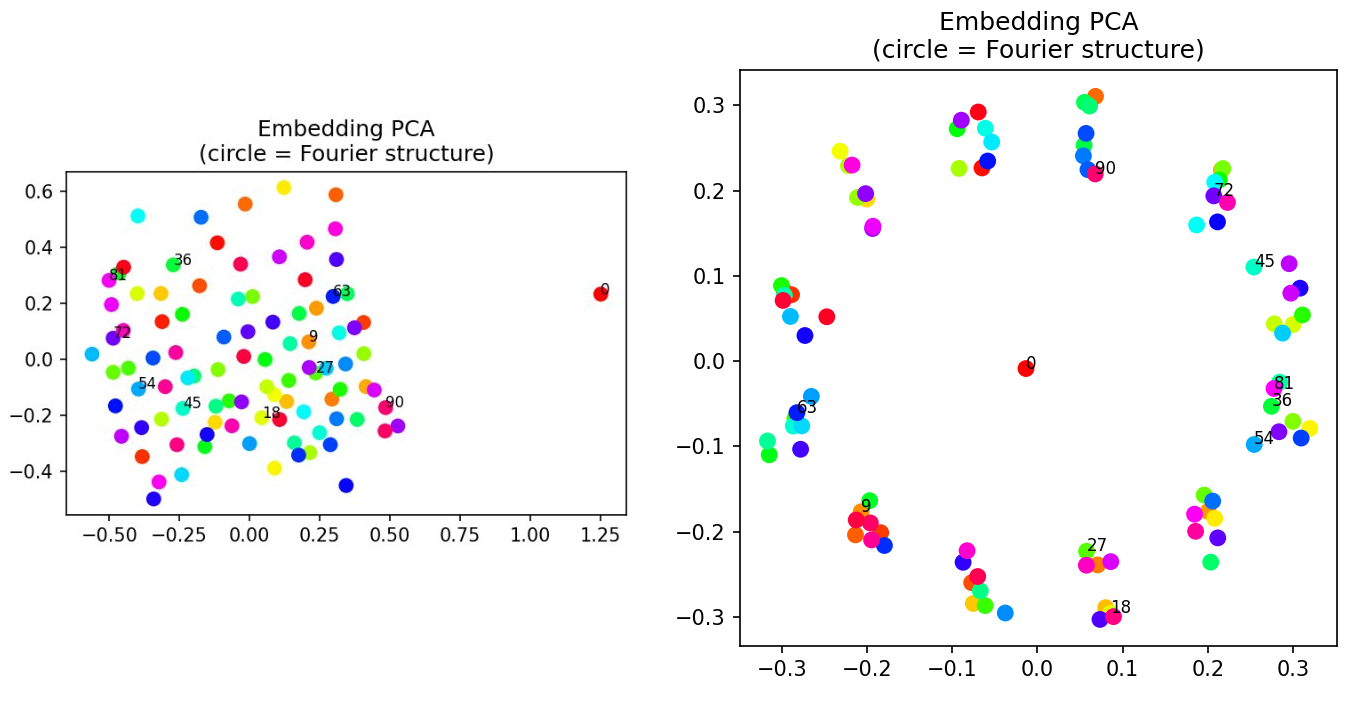}
    \caption{Linear readout layers constrain representation geometry. We train a transformer on modular division with an MLP classification head instead of linear unembedding. \textbf{(Left)} When the model finds a non-Fourier solution, embeddings lack circular structure and linear probes fail ($\sim$20\% accuracy). \textbf{(Right)} When the model discovers Fourier structure, linear probes succeed. Across random seeds, linear probe accuracy correlates with the Fourier representations emerges, but are not required by MLP heads. Linear readout interfaces would necessitate such directional structure.}

    \label{fig:fourier_analysis}
\end{figure}

%-----------------------------------------------------------------------------
\subsection{Main Theoretical Results}
%-----------------------------------------------------------------------------

Our first result establishes that invariant subspace structure is \emph{necessary} for any feature communicated through linear interfaces.

\begin{theorem}[Invariant Subspace Necessity]
\label{thm:invariant-subspace}
Let $\mathcal{M}$ be a transformer satisfying Assumption~\ref{ass:architecture}, and let $f$ be a communicable feature decoded through a linear interface $W$. Then there exists a context-invariant subspace $\mathcal{S}_f \subseteq \mathbb{R}^d$ such that the $f$-relevant component of $\mathbf{h}(c)$ lies in $\mathcal{S}_f$ for all contexts $c$ expressing $f$.
\end{theorem}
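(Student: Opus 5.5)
The plan is to build $\mathcal{S}_f$ directly from the linear interface and the decoding vector, and then show that the $f$-relevant part of $\mathbf{h}(c)$ is captured entirely by that subspace. Take the unembedding case of Definition~\ref{def:communicable}; the OV case is the same with $W_O W_V$ in place of $W_U$. Linear decodability gives a fixed $\boldsymbol{\phi}\in\mathbb{R}^{|V|}$ such that $\boldsymbol{\phi}^\top W_U \mathbf{h}(c) = g(f(c))$ for every $c$. Set
\begin{equation*}
\mathbf{d}_f := W_U^\top \boldsymbol{\phi} \in \mathbb{R}^d .
\end{equation*}
The decoding condition then reads $\langle \mathbf{d}_f, \mathbf{h}(c)\rangle = g(f(c))$. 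Note that $\mathbf{d}_f$ depends only on $\boldsymbol{\phi}$ and the parameter $W_U$. By Assumption~\ref{ass:architecture}(3), $W_U$ is shared across positions and contexts, so $\mathbf{d}_f$ does not depend on $c$. This is the context-invariance that Definition~\ref{def:invariant-subspace} requires.

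If $g$ is to separate several classes, a single readout vector may not be enough. In that case I would take one decoder $\boldsymbol{\phi}_k$ for each value, or for each coordinate of an injective encoding of $\mathcal{Y}$, and define $\mathcal{S}_f := \operatorname{span}\{W_U^\top\boldsymbol{\phi}_k\}_k$. This space is still fixed by $f$ and the parameters alone, and $\dim\mathcal{S}_f \le \min(K,d)$.

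Next, split every hidden state orthogonally as $\mathbf{h}(c) = P_{\mathcal{S}_f}\mathbf{h}(c) + P_{\mathcal{S}_f^\perp}\mathbf{h}(c)$. Because each decoder direction lies in $\mathcal{S}_f$, the interface output satisfies $\boldsymbol{\phi}_k^\top W_U \mathbf{h}(c) = \langle W_U^\top\boldsymbol{\phi}_k, P_{\mathcal{S}_f}\mathbf{h}(c)\rangle$. So the $\mathcal{S}_f^\perp$ component contributes nothing to the decoded value: whatever context-specific variation it carries is invisible through $W$. Since $g(f(c))$ is a function of $P_{\mathcal{S}_f}\mathbf{h}(c)$ alone, the $f$-relevant information lies entirely in $\mathcal{S}_f$ for every $c$ expressing $f$.

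The multi-context condition is what makes the invariance non-vacuous. For distinct $c_1, c_2$ with $f(c_1)=f(c_2)$, the projections onto $\mathcal{S}_f$ must give equal readouts, even though the full states $\mathbf{h}(c_1)$ and $\mathbf{h}(c_2)$ may differ arbitrarily in $\mathcal{S}_f^\perp$. One and the same subspace therefore serves both contexts.

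The main obstacle is that Definition~\ref{def:invariant-subspace} leaves ``$f$-relevant information'' informal. I would make it precise by defining the $f$-relevant component as the part of $\mathbf{h}(c)$ that the interface can see, namely the projection onto the row space of $\boldsymbol{\phi}^\top W_U$ (or of the stacked decoders). With that convention the theorem is essentially a tautology plus the parameter-sharing observation. A second subtlety is the residual stream: I would use Assumption~\ref{ass:architecture}(1) and (4) only to note that the decoder acts on the final $\mathbf{r}^{(L)}$, which is a sum of layer outputs, so the same $\mathcal{S}_f$ applies to the accumulated state. I would state clearly that the result does not claim components outside $\mathcal{S}_f$ carry no information about $f$ nonlinearly; it claims only that such components are not communicated through $W$.
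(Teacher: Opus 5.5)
Your proposal is correct and follows essentially the same route as the paper's proof. Both set $\mathbf{w}_f = W^\top\boldsymbol{\phi}$ and observe that contexts with equal readouts can differ only in $\mathbf{w}_f^\perp$, then take $\mathcal{S}_f$ to be the span of the component of $\mathbf{h}(c)$ that the interface sees; your explicit choice $\mathcal{S}_f=\mathrm{span}(W^\top\boldsymbol{\phi})$ via orthogonal projection tightens the paper's Step 2, since its decomposition $\mathbf{h}(c)=\mathbf{h}_f(c)+\boldsymbol{\eta}(c)$ is unique, and $\mathrm{span}\{\mathbf{h}_f(c)\}$ depends only on $\mathbf{w}_f$, only when $\mathbf{h}_f(c)$ is taken to be exactly that projection (your multi-decoder extension is optional, because a single $\boldsymbol{\phi}$ with injective $g$ already distinguishes finitely many values).
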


\begin{proof}[Proof sketch]
By linear decodability, the $f$-relevant output is $o_f(c) = \mathbf{w}_f^\top \mathbf{h}(c)$ for some $\mathbf{w}_f \in \mathbb{R}^d$. Contexts requiring identical outputs differ only in $\mathbf{w}_f^\perp$, so $f$-relevant information lies in a subspace determined by $\mathbf{w}_f$ alone—independent of context. Full proof in Appendix~\ref{app:proofs}.
\end{proof}

\subsection{Capacity Constraints Force Factorized Representations}

The unembedding matrix $W_U \in \mathbb{R}^{|\mathcal{V}| \times d}$ maps 
hidden states to logits over $|\mathcal{V}|$ tokens. With $|\mathcal{V}| \gg d$, 
tokens cannot occupy orthogonal directions---they must share structure.

\begin{proposition}[Capacity Constraint Implies Feature Sharing]
\label{prop:capacity}
Let $\mathcal{M}$ be a transformer with vocabulary $|\mathcal{V}|$ and 
hidden dimension $d$, where $|\mathcal{V}| \gg d$. If (i) token logits are 
computed via linear readout $\textup{logit}_t = \mathbf{w}_t^\top \mathbf{h}(c)$, 
(ii) each context activates a sparse subset of features, and (iii) multiple 
tokens share semantic attributes, then the optimal representation factorizes as:
\begin{equation}
    \mathbf{w}_t = \sum_{f \in F_t} \alpha_{t,f} \, \mathbf{d}_f
\end{equation}
where $\{\mathbf{d}_f\}$ are shared feature directions with $|F| \ll |\mathcal{V}|$.
\end{proposition}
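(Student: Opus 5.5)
The plan is to make ``optimal representation'' precise through a constrained model and then argue by counting dimensions and parameters. Concretely, I will model the readout as $W_U\in\mathbb{R}^{|\mathcal{V}|\times d}$ with rows $\mathbf{w}_t$. Hypothesis (ii) becomes a generative assumption: each context $c$ expresses a sparse set $A(c)\subseteq F$ of semantic attributes drawn from a finite attribute set $F$. Hypothesis (iii) becomes the statement that the target logit of token $t$ in context $c$ depends on $c$ only through $A(c)\cap F_t$, where $F_t\subseteq F$ is the set of attributes that token $t$ carries. Optimality will mean minimizing expected cross-entropy, or a squared-logit surrogate, plus a weight-decay penalty $\lambda\|W_U\|_F^2$. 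This penalty is the one optimization ingredient I need beyond Assumption~\ref{ass:architecture}.

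The first step uses Theorem~\ref{thm:invariant-subspace}. Each attribute $f\in F$ is communicable, since it is multi-context and linearly decoded through $W_U$. The theorem therefore gives a context-invariant subspace $\mathcal{S}_f$. By the minimal-norm argument sketched for that theorem, together with the sparsity assumption, I will take $\mathcal{S}_f=\mathrm{span}(\mathbf{d}_f)$ and write $\mathbf{h}(c)=\sum_{f\in A(c)}\beta_f(c)\,\mathbf{d}_f+\mathbf{n}(c)$. Here $\mathbf{n}(c)$ is orthogonal to every readout direction that matters.

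The second step is to show that at the optimum each $\mathbf{w}_t$ lies in $\mathrm{span}\{\mathbf{d}_f: f\in F_t\}$. Decompose $\mathbf{w}_t=\mathbf{w}_t^{\parallel}+\mathbf{w}_t^{\perp}$ relative to that span. The component $\mathbf{w}_t^{\perp}$ either has no effect on the logits or injects context dependence through attributes outside $F_t$, which the target logits forbid. Dropping $\mathbf{w}_t^{\perp}$ therefore does not increase the loss and strictly lowers the penalty. This is a standard projection argument, and it gives the displayed factorization with coefficients $\alpha_{t,f}=\langle \mathbf{w}_t,\mathbf{d}_f^{*}\rangle$, taken with respect to a dual basis.

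The third step gives the size claim $|F|\ll|\mathcal{V}|$. Since $|\mathcal{V}|\gg d$, the rows $\mathbf{w}_t$ span at most $d$ dimensions. Any representation in which each token had its own private direction would therefore be rank-deficient and could not separate the logits of tokens with different attribute sets. The shared directions $\{\mathbf{d}_f\}$ number at most the effective rank needed, and sparsity lets $|F|$ exceed $d$ through near-orthogonal superposition but still stay far below $|\mathcal{V}|$. I will phrase this as a counting bound: the number of distinct logit patterns is controlled by the attribute combinations rather than by $|\mathcal{V}|$.

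The main obstacle is the second step. It requires showing that the $\mathbf{d}_f$ obtained from Theorem~\ref{thm:invariant-subspace} for different attributes are consistent across tokens, so that the same $\mathbf{d}_f$ serves every $t$ with $f\in F_t$ and is not defined separately for each token. My first attempt will be to use Assumption~\ref{ass:architecture}(3): $\mathbf{h}(c)$ is produced by parameters shared across all positions and does not depend on which token is being read out. The direction for $f$ in $\mathbf{h}$ is therefore fixed once, and every $\mathbf{w}_t$ must read it through that same direction. If full generality proves intractable, I will state the result under an explicit superposition assumption that the $\mathbf{d}_f$ are nearly orthogonal.
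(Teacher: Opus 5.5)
There is a genuine gap, and it is in your second step. That step fails in exactly the regime your third step relies on.

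The projection argument presumes that $\mathbf{w}_t^{\parallel}$, the part of $\mathbf{w}_t$ in $\mathrm{span}\{\mathbf{d}_f : f\in F_t\}$, is already blind to attributes outside $F_t$, so that $\mathbf{w}_t^{\perp}$ can only add forbidden dependence. That holds only when the $\mathbf{d}_f$ are mutually orthogonal. For a non-orthogonal dictionary, the readout that isolates an attribute is a dual vector, which generally lies outside $\mathrm{span}\{\mathbf{d}_f : f\in F_t\}$.

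A concrete case: take $d=2$, $\mathbf{d}_1=\mathbf{e}_1$, $\mathbf{d}_2=(\mathbf{e}_1+\mathbf{e}_2)/\sqrt{2}$, $F_t=\{1\}$, and $\mathbf{h}=\beta_1\mathbf{d}_1+\beta_2\mathbf{d}_2$. The only readouts whose logit is independent of $\beta_2$ are multiples of $\mathbf{e}_1-\mathbf{e}_2\notin\mathrm{span}(\mathbf{d}_1)$. Discarding the perpendicular part $-\mathbf{e}_2$ reintroduces the $\beta_2$-dependence, so the loss goes up, not down.

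Your third step wants $|F|>d$ via superposition, which forces non-orthogonality. Once the $\mathbf{d}_g$ with $g\notin F_t$ span $\mathbb{R}^d$, no nonzero linear readout can reproduce your targets exactly, so the optimum is a trade-off rather than a projection. In the one regime where step two is valid (orthogonal $\mathbf{d}_f$, hence $|F|\le d$), the size claim is immediate and your counting step is not needed. In any case $|F|\ll|\mathcal{V}|$ cannot come out of counting, since $F$ is an input to your model.

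Your first step compounds this. In Theorem~\ref{thm:invariant-subspace} the subspace is built from $\mathbf{w}_f=W_U^\top\boldsymbol{\phi}$, a combination of the very rows $\mathbf{w}_t$ you are characterizing. So you cannot hold $\mathbf{d}_f$ fixed while perturbing $\mathbf{w}_t$. The theorem also only gives a per-feature split $\mathbf{h}=\mathbf{h}_f+\boldsymbol{\eta}_f$ with $\boldsymbol{\eta}_f\perp\mathbf{w}_f$. It does not give the joint decomposition $\sum_{f\in A(c)}\beta_f\mathbf{d}_f+\mathbf{n}(c)$ with $\mathbf{n}$ orthogonal to all readouts, which again needs orthogonality.

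This is the real source of your token-consistency obstacle, and Assumption~\ref{ass:architecture}(3) does not remove it. Parameter sharing makes $\mathbf{h}(c)$ independent of which token is scored, but it says nothing about attributes being encoded along fixed directions.

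The paper's proof avoids both problems in two ways. It takes $\mathbf{d}_f$ on the readout side (fixed in $W_U$, with $\phi_f(c)=\mathbf{d}_f^\top\mathbf{h}(c)$). It also never claims exact factorization. Instead it uses $|\mathcal{V}|\gg d$ to force sharing, and charges each shared direction an interference cost proportional to the co-occurrence probability of the tokens sharing it times their overlaps. It then argues that under sparse co-occurrence, with $|F|\ll|\mathcal{V}|$ taken as a hypothesis, factorization beats unique directions. This is a heuristic comparison of two candidate strategies rather than an exact optimality proof. Theorem~\ref{thm:invariant-subspace} enters only afterwards, as a consequence of the factorization.

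To rescue your framework, posit the sparse-dictionary form of $\mathbf{h}(c)$ as an explicit hypothesis. Then either assume orthogonality, or replace the exact projection with a co-occurrence-weighted interference bound of the paper's kind.
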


\begin{proof}[Proof sketch]
With $|\mathcal{V}| \gg d$, tokens must share directions. Sharing incurs 
interference only when tokens co-occur; for sparse features, this cost is 
minimal. Factorization achieves $|F|$ dimensions plus sparse interference, 
versus the infeasible $|\mathcal{V}|$ dimensions for unique encodings. 
Under factorization, $\textup{logit}_t = \sum_{f \in F_t} \alpha_{t,f} 
(\mathbf{d}_f^\top \mathbf{h}(c))$, so each factor $\mathbf{d}_f$ must be 
linearly decodable and context-invariant---satisfying the conditions of 
Theorem~\ref{thm:invariant-subspace}. Full proof in Appendix~\ref{app:capacity}.
\end{proof}

\begin{remark}[Implicit Classification Revisited]
From this view, predicting a token \emph{is} implicit classification: 
the model checks ``does this context encode the factor combination for 
token $t$?'' Each factor $\mathbf{d}_f$ partitions contexts into those 
expressing $f$ versus those that do not. The compression pressure ensures 
factors are reusable across tokens, maximizing the number of features 
that can coexist in the representation.
\end{remark}

This compression pressure aligns with the information bottleneck principle~\citep{tishby2015deep}: representations must discard context-specific details while preserving task-relevant features, favoring low-dimensional factorized encodings.

\begin{corollary}[Directional Decomposition]
\label{cor:decomposition}
If feature $f$ exhibits directional invariance with direction $\mathbf{d}_f$, then for all contexts $c$ expressing $f$:
\begin{equation}
    \mathbf{h}(c) = \alpha_f(c) \cdot \mathbf{d}_f + \boldsymbol{\eta}(c)
    \label{eq:decomposition}
\end{equation}
where $\alpha_f(c) \in \mathbb{R}$ is a context-dependent magnitude and $\boldsymbol{\eta}(c) \perp \mathbf{d}_f$ captures the rest of the features.
\end{corollary}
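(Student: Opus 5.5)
The corollary is essentially an orthogonal decomposition applied to the structure supplied by Definition~\ref{def:directional}. The plan is to fix $\mathbf{d}_f$ as given there and, without loss of generality, normalize it so that $\|\mathbf{d}_f\|=1$; rescaling only rescales $\alpha_f(c)$. The orthogonal projection $P_f = \mathbf{d}_f\mathbf{d}_f^\top$ onto $\mathcal{S}_f=\mathrm{span}(\{\mathbf{d}_f\})$ is then determined by $f$ and the model parameters alone. This is exactly the context-independence that Theorem~\ref{thm:invariant-subspace} guarantees for $\mathcal{S}_f$.

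For any context $c$ expressing $f$, I will split $\mathbf{h}(c) = P_f\mathbf{h}(c) + (I-P_f)\mathbf{h}(c)$ and define
\begin{equation*}
\alpha_f(c) := \mathbf{d}_f^\top \mathbf{h}(c), \qquad \boldsymbol{\eta}(c) := \mathbf{h}(c) - \alpha_f(c)\,\mathbf{d}_f .
\end{equation*}
Three facts then need checking. The identity \eqref{eq:decomposition} holds by construction. Orthogonality follows because $\mathbf{d}_f^\top\boldsymbol{\eta}(c) = \alpha_f(c) - \alpha_f(c)\|\mathbf{d}_f\|^2 = 0$. The decomposition is unique because $\mathbb{R}^d = \mathcal{S}_f \oplus \mathcal{S}_f^\perp$. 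Uniqueness matters because it makes $\alpha_f(c)$ a well-defined function of the context, which is the ``context-dependent magnitude'' in the statement, while the direction itself carries no $c$-dependence.

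The main obstacle is not the algebra. It is justifying the clause that $\boldsymbol{\eta}(c)$ ``captures the rest of the features'', meaning it carries no $f$-relevant information. For this I will use the contextual-invariance clause of Definition~\ref{def:invariant-subspace}: all $f$-relevant information in $\mathbf{h}(c)$ lies in $\mathcal{S}_f$, so whatever lies in $\mathcal{S}_f^\perp$ is by definition not $f$-relevant. I will make this concrete through the proof sketch of Theorem~\ref{thm:invariant-subspace}. That sketch says the readout is $o_f(c)=\mathbf{w}_f^\top\mathbf{h}(c)$, and in the directional case $\mathbf{w}_f\in\mathcal{S}_f$. Hence $\mathbf{w}_f \propto \mathbf{d}_f$, and so $o_f(c)$ is proportional to $\alpha_f(c)$ and is unaffected by $\boldsymbol{\eta}(c)$. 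This makes precise that the decoded value $g(f(c))$ depends on $\mathbf{h}(c)$ only through $\alpha_f(c)$.

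If one wants more than this, namely that $\boldsymbol{\eta}(c)$ literally encodes other features, I will note that this is interpretive. The claim can be supported by applying the same argument to each other communicable feature $f'$ with its own direction $\mathbf{d}_{f'}$, using Proposition~\ref{prop:capacity}. That goes beyond the formal content of the corollary, and I would state it as a remark rather than prove it.
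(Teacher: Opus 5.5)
Your proposal is correct and matches the paper's approach. The paper states this corollary without a separate proof, as an immediate consequence of Definition~\ref{def:directional} together with the split $\mathbf{h}(c)=\mathbf{h}_f(c)+\boldsymbol{\eta}(c)$ from Step~2 of the proof of Theorem~\ref{thm:invariant-subspace}, which in the one-dimensional case amounts to your orthogonal decomposition along the fixed direction $\mathbf{d}_f$; the one step you assert rather than derive, $\mathbf{w}_f\in\mathcal{S}_f$, is best justified from that theorem's construction, since taking $\mathbf{h}_f(c)$ along $\mathbf{w}_f$ (the choice under which Step~2's claimed uniqueness actually holds) gives $\mathcal{S}_f=\mathrm{span}(\{\mathbf{w}_f\})$, hence $\mathbf{d}_f\propto\mathbf{w}_f$, so the paper's $\boldsymbol{\eta}(c)\in\mathbf{w}_f^\perp$ coincides with your $\boldsymbol{\eta}(c)\perp\mathbf{d}_f$.
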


This decomposition is central: the \emph{direction} $\mathbf{d}_f$ is invariant across contexts, while the \emph{magnitude} $\alpha_f(c)$ varies. Linear interfaces preserve this structure:
\begin{equation}
    W(\alpha \cdot \mathbf{d}_f) = \alpha \cdot (W\mathbf{d}_f)
\end{equation}

\begin{figure*}
    \centering
    \includegraphics[width=0.24\linewidth]{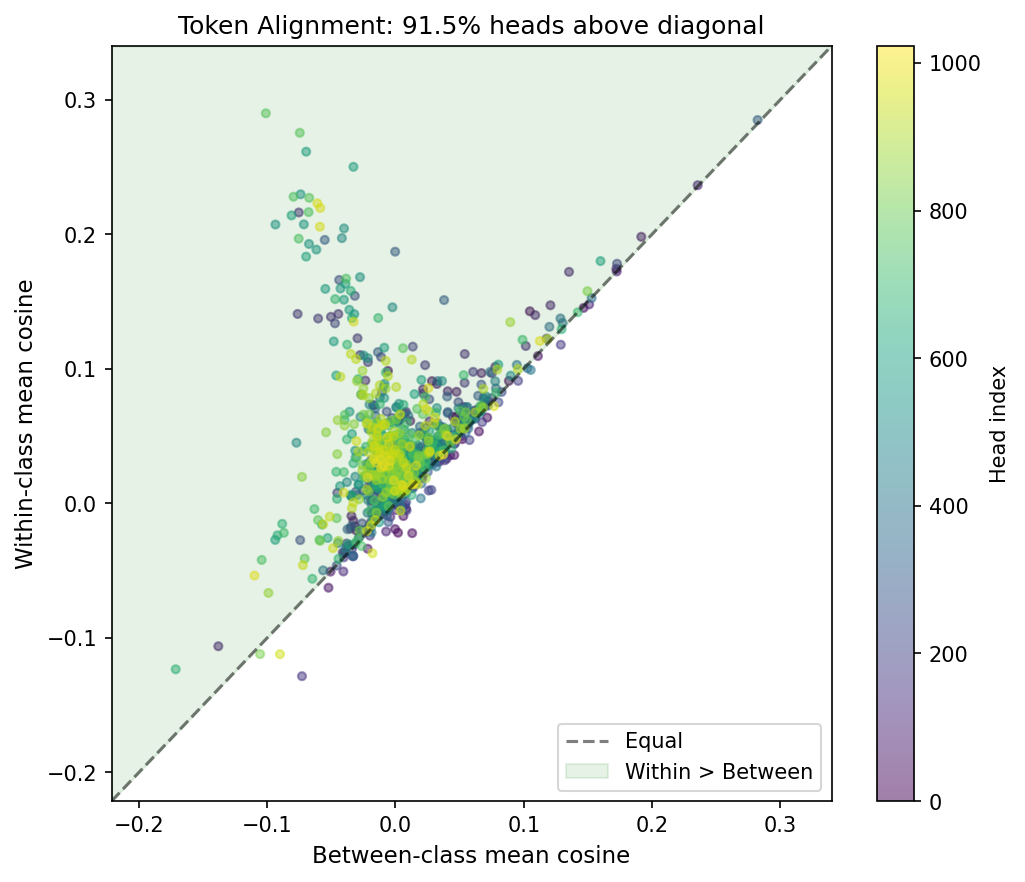}
    \includegraphics[width=0.24\linewidth]{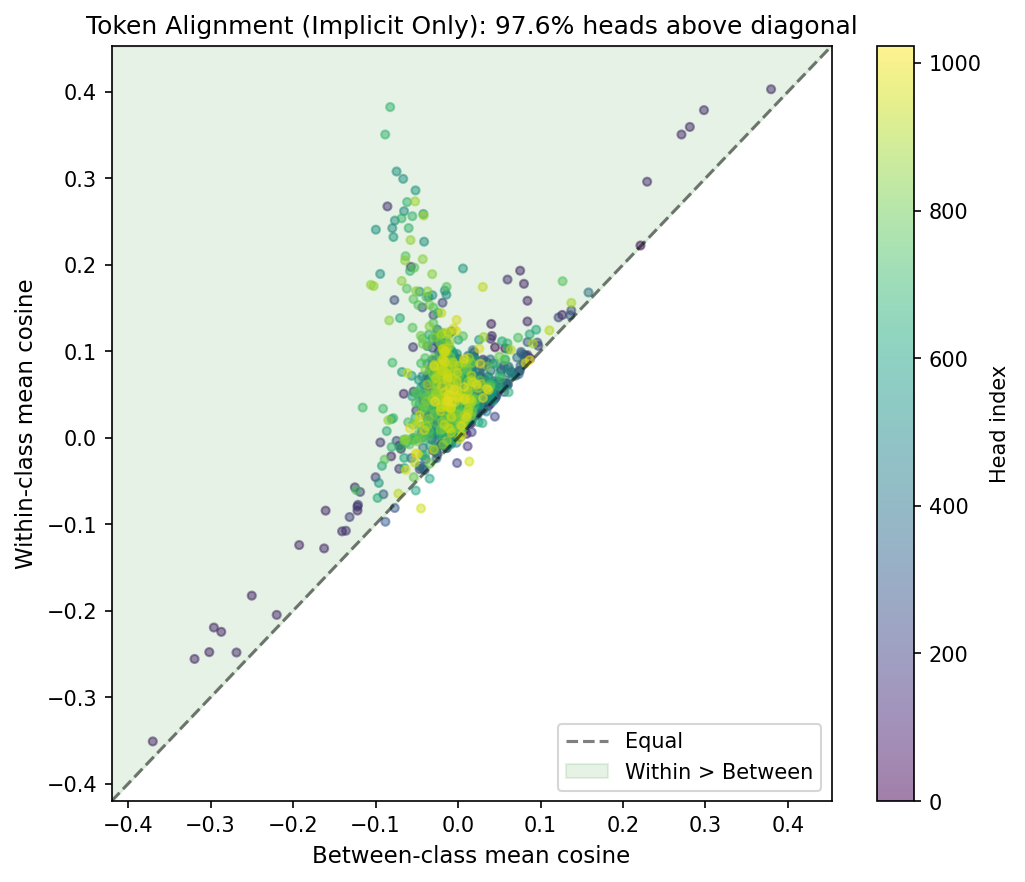}
    \includegraphics[width=0.24\linewidth]{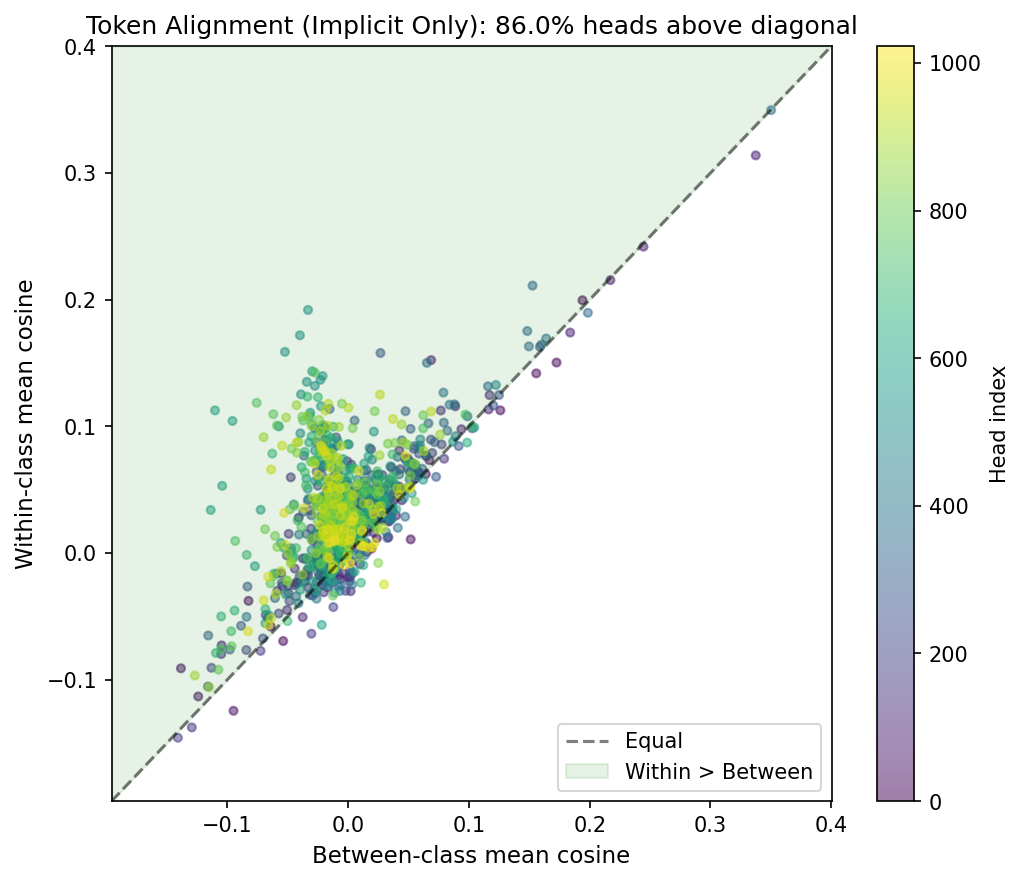}
    \includegraphics[width=0.24\linewidth]{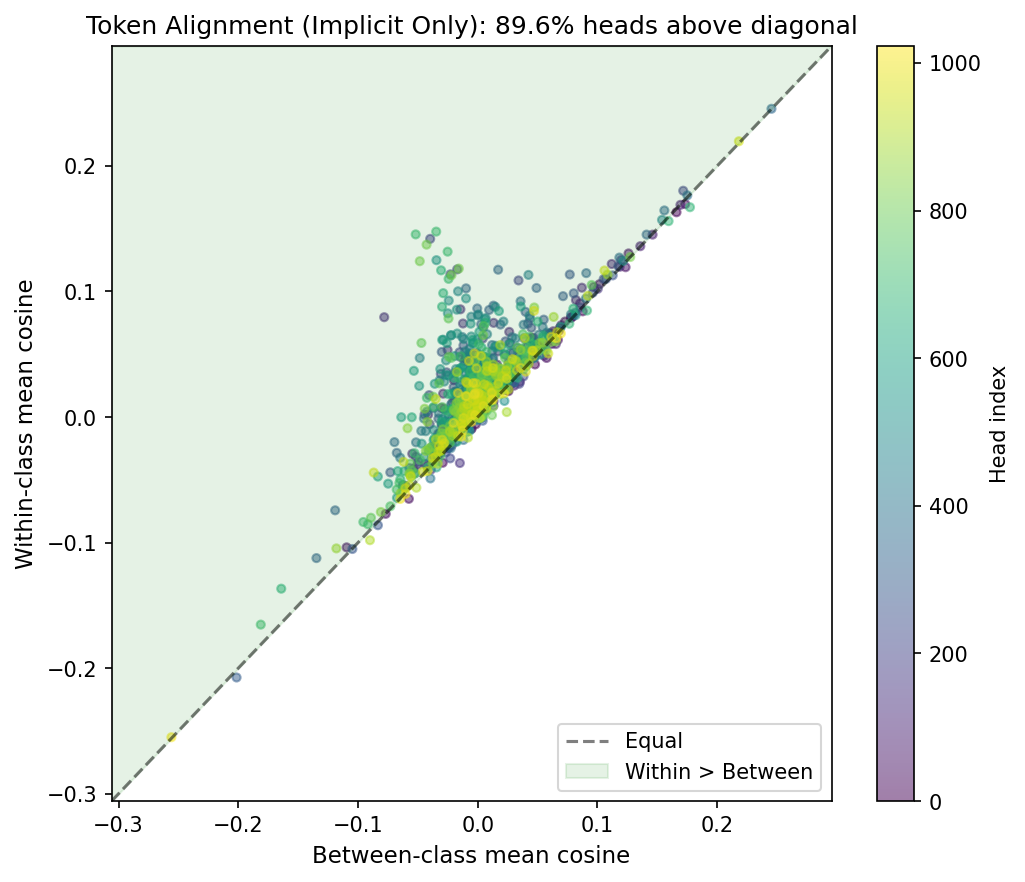}
    \caption{Token alignment validation of the Self-Reference Property across four 
    datasets in LLaMA3-8B. Each point represents one attention head; the x-axis shows 
    mean cosine similarity between class tokens and \emph{other}-class implicit instances, 
    while the y-axis shows similarity to \emph{same}-class implicit instances. Points 
    above the diagonal indicate stronger alignment with the correct class. Percentages 
    indicate heads above diagonal: \textbf{Countries} 91.5\%, \textbf{Animals} 97.6\%, 
    \textbf{Cartoon Characters} 86.0\%, \textbf{Emotions} 89.6\%.}
    \label{fig:cross_within_token}
\end{figure*}

%-----------------------------------------------------------------------------
\subsection{The Identity-Projection Operator}
%-----------------------------------------------------------------------------

We now introduce the operator that enables practical applications.

\begin{definition}[Identity-Projection Operator]
\label{def:identity-projection}
For a context $c$ and feature $f$ with invariant direction $\mathbf{d}_f$, the \emph{Identity-Projection} is:
\begin{equation}
\label{eq:identity_operator}
    \mathcal{I}_f(c) \triangleq \mathbf{h}(c)^\top \hat{\mathbf{d}}_f
\end{equation}
where $\hat{\mathbf{d}}_f = \mathbf{d}_f / \|\mathbf{d}_f\|$ is the unit direction.
\end{definition}

The Identity-Projection operator \emph{focuses} on feature $f$ within the representation $\mathbf{h}(c)$. By architectural necessity (Theorem~\ref{thm:invariant-subspace}), $f$ flows through $\mathbf{h}(c)$ along direction $\mathbf{d}_f$; projecting onto this direction extracts the feature's signal from the superposition of all features present.

\begin{proposition}[Feature Operations]
\label{prop:operations}
If feature $f$ satisfies directional invariance, then:
\begin{enumerate}[label=(\roman*)]
    \item \textbf{Detection:} $\mathcal{I}_f(c) > \tau$ indicates $f \in c$ for an appropriate threshold $\tau$
    \item \textbf{Measurement:} $|\mathcal{I}_f(c)|$ quantifies feature strength
    % \item \textbf{Removal:} $\mathbf{h}(c) - \mathcal{I}_f(c) \cdot \hat{\mathbf{d}}_f$ attenuates feature $f$
    % \item \textbf{Injection:} $\mathbf{h}(c) + \lambda \cdot \mathbf{d}_f$ amplifies feature $f$
\end{enumerate}
\end{proposition}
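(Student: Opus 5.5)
The plan is to substitute the Directional Decomposition of Corollary~\ref{cor:decomposition} directly into the Identity-Projection operator of Definition~\ref{def:identity-projection}, and then read off both claims. Since $f$ satisfies directional invariance, for every context $c$ expressing $f$ we may write $\mathbf{h}(c) = \alpha_f(c)\,\mathbf{d}_f + \boldsymbol{\eta}(c)$ with $\boldsymbol{\eta}(c)\perp \mathbf{d}_f$. Taking the inner product with $\hat{\mathbf{d}}_f$ kills the orthogonal term. This gives the exact identity
\begin{equation*}
\mathcal{I}_f(c) = \alpha_f(c)\,\|\mathbf{d}_f\|,
\end{equation*}
so the Identity-Projection is, up to the fixed positive constant $\|\mathbf{d}_f\|$, the context-dependent magnitude of the feature.

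Part (ii), \textbf{Measurement}, follows immediately. The map $c\mapsto |\mathcal{I}_f(c)|$ equals $\|\mathbf{d}_f\|\,|\alpha_f(c)|$, a strictly monotone function of the feature strength $|\alpha_f(c)|$. Because $\mathbf{d}_f$ is context-invariant by Definition~\ref{def:directional}, this scaling constant is the same for all contexts. Comparisons of $|\mathcal{I}_f|$ across contexts are therefore comparisons of feature strength. I would also note the linear-interface consistency $W(\alpha\mathbf{d}_f)=\alpha W\mathbf{d}_f$, so downstream readouts see the same scalar.

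Part (i), \textbf{Detection}, is where the real work lies. The decomposition in Corollary~\ref{cor:decomposition} is only asserted for contexts expressing $f$; it says nothing about the projection $\mathbf{h}(c)^\top\hat{\mathbf{d}}_f$ for contexts not expressing $f$. I would therefore make explicit the interpretation implicit in Definition~\ref{def:invariant-subspace}: the $f$-relevant information lies entirely in $\mathcal{S}_f=\mathrm{span}(\mathbf{d}_f)$. Consequently, for a context $c'$ not expressing $f$, the component of $\mathbf{h}(c')$ along $\hat{\mathbf{d}}_f$ carries no $f$ signal and is bounded by some interference level $\epsilon$. This bound on $\epsilon$ is the one I expect to need to justify via the sparse-interference argument of Proposition~\ref{prop:capacity}.

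With that in hand, I would invoke linear decodability from Definition~\ref{def:communicable}. The readout $\boldsymbol{\phi}^\top W_U\mathbf{h}(c)=g(f(c))$ must separate the values $g(y)$ for contexts with and without $f$. Along the invariant direction this forces a margin $\alpha_f(c)\|\mathbf{d}_f\| \ge \gamma > \epsilon$ for expressing contexts, after choosing the sign of $\mathbf{d}_f$ so that presence corresponds to positive magnitude. Setting $\tau\in[\epsilon,\gamma)$ then yields the detection rule $\mathcal{I}_f(c)>\tau \iff f\in c$.

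The main obstacle is establishing this margin. The statement's phrase ``an appropriate threshold'' signals that existence of $\tau$, not a specific value, is what is needed. I would first try to obtain it from the finiteness of $\mathcal{Y}$ together with injectivity of $g$ on the present/absent split. If that fails, I would state the margin as an explicit separability hypothesis.
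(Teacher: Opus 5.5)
Your argument for (ii) is the paper's. The paper substitutes the decomposition of Corollary~\ref{cor:decomposition} into Definition~\ref{def:identity-projection}, writes $\mathcal{I}_f(c)=\alpha_f(c)\|\mathbf{d}_f\|+\boldsymbol{\eta}(c)^\top\hat{\mathbf{d}}_f$, drops the cross term by assuming features are ``approximately orthogonal,'' and says both operations ``follow from linearity.'' Your exact identity is what the corollary actually gives, since it asserts $\boldsymbol{\eta}(c)\perp\mathbf{d}_f$. The paper never treats contexts that do not express $f$, so you are right that detection needs more than it offers. The route you sketch for (i), however, has a weak link.

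\textbf{The obstacle is slightly misdiagnosed.} The orthogonal decomposition along $\mathbf{d}_f$ exists for every context; it is just projection, with $\alpha_f(c')=\mathbf{h}(c')^\top\mathbf{d}_f/\|\mathbf{d}_f\|^2$. What is missing is control of the value of $\alpha_f(c')$ when $f$ is absent, not the decomposition itself.

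\textbf{Proposition~\ref{prop:capacity} cannot supply your $\epsilon$.} It is a heuristic about factorizing the readout rows $\mathbf{w}_t$. It says nothing quantitative about $\hat{\mathbf{d}}_f^\top\mathbf{h}(c')$ for contexts lacking $f$.

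\textbf{The decodability margin lives on the wrong vector.} Linear decodability gives a margin on the functional $\mathbf{w}_f=W_U^\top\boldsymbol{\phi}$, not on $\hat{\mathbf{d}}_f$. Your phrase ``along the invariant direction'' silently assumes the two are parallel. That identification is the missing step, and it is available from the proof of Theorem~\ref{thm:invariant-subspace}. There the $f$-irrelevant part lies in $\mathbf{w}_f^\perp$, and the only way to make the claimed unique decomposition actually unique is to take $\mathbf{h}_f(c)\in\mathrm{span}(\mathbf{w}_f)$. Hence $\mathcal{S}_f=\mathrm{span}(\mathbf{w}_f)$ and $\mathbf{d}_f\parallel\mathbf{w}_f$, which is consistent with the corollary's $\boldsymbol{\eta}\perp\mathbf{d}_f$.

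\textbf{The repair.} Decodability holds for all $c\in\mathcal{C}$, so you get $\mathcal{I}_f(c)=\pm g(f(c))/\|\mathbf{w}_f\|$ for every context, whether it expresses $f$ or not. Your first idea then works exactly. Take $\mathcal{Y}$ finite, and require the ``present'' value of $g$ to be distinct from, and (after your sign choice) larger than, every ``absent'' value. Any $\tau$ strictly between them gives $\mathcal{I}_f(c)>\tau \iff f\in c$, with no sparsity or interference bound needed. For $K>2$, a one-sided threshold also needs the present value to be extremal among the values of $g$. Without this identification, your fallback of positing separability is essentially what the paper assumes implicitly through ``an appropriate threshold.''
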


\begin{proof}
From the decomposition \eqref{eq:decomposition}:
\begin{equation}
    \mathcal{I}_f(c) = \alpha_f(c)\|\mathbf{d}_f\| + \boldsymbol{\eta}(c)^\top \hat{\mathbf{d}}_f
\end{equation}
When features are approximately orthogonal ($\boldsymbol{\eta}(c)^\top \hat{\mathbf{d}}_f \approx 0$), we have $\mathcal{I}_f(c) \propto \alpha_f(c)$, and the operations follow from linearity.
\end{proof}

\subsection{Linear Readouts Constrain Representation Geometry}
\label{sec:emp_invariance}

To validate Theorem~\ref{thm:invariant-subspace}, we compare representation geometry under linear versus nonlinear readout layers. If linear interfaces necessitate invariant subspace structure, then replacing the linear head with an MLP should relax this constraint.

\textbf{Setup.} Prior work on grokking~\citep{power2022grokking} shows that transformers learning modular arithmetic develop Fourier representations---embeddings arranged in a circle where addition becomes rotation. \citet{jiang2024origins} argue this linear structure emerges from training dynamics (gradient descent + cross-entropy) regardless of architecture. We test an alternative hypothesis: linear structure emerges because linear readout layers require it.

\textbf{Experiment.} We train a 2-layer transformer on modular division ($p=97$) with an MLP classification head instead of the standard linear unembedding. A separate linear probe is trained on the same hidden states (with gradients detached). The MLP head achieves $\sim$95\% validation accuracy, but the linear probe fails ($\sim$20\%). Fourier analysis (Fig.~\ref{fig:fourier_analysis}) confirms the embeddings lack circular structure---PCA shows scattered points rather than an ordered circle.

\textbf{Key finding.} Across random seeds, some runs do develop (see Fig.\ref{fig:fourier_analysis}) Fourier structure---and in exactly those runs, the linear probe also succeeds. This correlation demonstrates that Fourier representations are not inevitable: they emerge when the model happens to find that solution, but nonlinear readouts permit alternatives. Linear interfaces constrain representations to be directional; nonlinear interfaces permit but do not require such structure.
%-----------------------------------------------------------------------------
\subsection{The Self-Reference Property}
%-----------------------------------------------------------------------------

A key question remains: how do we obtain $\mathbf{d}_f$ without supervision? Our central insight is that tokens themselves provide these directions.

\begin{theorem}[Self-Reference Property]
\label{thm:self-reference}
Let $t$ be a token with associated semantic feature $f_t$. If $f_t$ is communicated through linear interfaces, then $t$'s representation provides the invariant direction for $f_t$:
\[
\mathbf{h}_t \propto \mathbf{d}_{f_t}
\]
where $\mathbf{h}_t$ is obtained by passing token $t$ through the model.
\end{theorem}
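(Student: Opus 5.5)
The plan is to treat the bare token $t$ as a canonical context $c_t$ (the sequence consisting of $t$ alone, or $t$ at the position of interest with minimal surrounding material), so that $\mathbf{h}_t = \mathbf{h}(c_t)$. The first step is to check that $c_t$ is itself a context expressing $f_t$. This is essentially definitional: $f_t$ is the feature associated with $t$, so $f_t(c_t)$ takes the value the token names. Since $f_t$ is assumed communicated through linear interfaces, it is a communicable feature. Theorem~\ref{thm:invariant-subspace} therefore provides a context-invariant subspace $\mathcal{S}_{f_t}$, and under directional invariance (Definition~\ref{def:directional}) this subspace is $\operatorname{span}(\{\mathbf{d}_{f_t}\})$. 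Corollary~\ref{cor:decomposition} applied to $c_t$ then gives $\mathbf{h}_t = \alpha_{f_t}(c_t)\,\mathbf{d}_{f_t} + \boldsymbol{\eta}(c_t)$ with $\boldsymbol{\eta}(c_t)\perp \mathbf{d}_{f_t}$.

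The second step is to show that $\alpha_{f_t}(c_t)\neq 0$ and that $\boldsymbol{\eta}(c_t)$ is negligible. For $\alpha$ I would use linear decodability: $\boldsymbol{\phi}^\top W_U \mathbf{h}(c_t) = g(f_t(c_t))$. The readout vector $\mathbf{w}_{f_t} = W_U^\top\boldsymbol{\phi}$ lies in the $f_t$-relevant subspace, so the output depends on $\mathbf{h}_t$ only through $\alpha$. Provided $g$ separates the value expressed by $t$ from the absence of the feature, $\alpha$ must be nonzero.

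For $\boldsymbol{\eta}$ I would argue from Proposition~\ref{prop:capacity}. In the isolated-token context, the only active factors are those in $F_t$, since there is no further context to activate others. The residual stream at $c_t$ is built from the embedding of $t$ plus layer updates that act on features present in $t$ alone. Hence $\mathbf{h}_t$ lies in $\operatorname{span}\{\mathbf{d}_f : f\in F_t\}$. When $f_t$ is the dominant factor of $t$, this reduces to $\mathbf{h}_t \propto \mathbf{d}_{f_t}$ up to sparse interference, which approximate orthogonality controls as in Proposition~\ref{prop:operations}.

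The main obstacle is this last step. Nothing in the decomposition alone forces $\boldsymbol{\eta}(c_t)=0$: the bare token still carries orthogonal information such as surface form, position, and other features of $t$. So the honest statement is likely that proportionality holds up to a component orthogonal to $\mathbf{d}_{f_t}$, whose projection $\boldsymbol{\eta}(c_t)^\top\hat{\mathbf{d}}_{f_t}$ vanishes. That already suffices for the Identity-Projection use, since $\mathbf{h}_t^\top \hat{\mathbf{d}}_{f_t}$ is nonzero and the cosine with any other expressing context is driven by $\mathbf{d}_{f_t}$.

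My first attempt would be to prove this weaker statement rigorously: the projection of $\mathbf{h}_t$ onto $\mathcal{S}_{f_t}$ is a nonzero multiple of $\mathbf{d}_{f_t}$. I would then add the sparsity assumption of Proposition~\ref{prop:capacity}, that $t$ activates essentially only $f_t$, to upgrade this to the full proportionality $\mathbf{h}_t\propto\mathbf{d}_{f_t}$.
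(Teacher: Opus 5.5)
This is essentially the paper's proof. It too combines Theorem~\ref{thm:invariant-subspace} with an assumed directional invariance to write $\mathbf{h}_t = \lambda_t \mathbf{d}_{f_t} + \boldsymbol{\eta}_t$ with $\lambda_t \neq 0$, justified only by $t$ being a ``canonical expression'' of $f_t$. It then obtains $\mathbf{h}_t \propto \mathbf{d}_{f_t}$ only to good approximation, by assuming $t$ expresses a single dominant feature. That is exactly the obstacle and fallback you identify.

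Your extra justifications go slightly beyond the paper, with two caveats. Proposition~\ref{prop:capacity} constrains the unembedding rows $\mathbf{w}_t$, not $\mathbf{h}_t$. And for $\lambda_t \neq 0$ you need $g(f_t(c_t)) \neq 0$, which separation from absence gives only if absence is decoded as $0$. So in the end the dominance assumption does the work, as in the paper.
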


\begin{proof}[Proof sketch]
By Theorem~\ref{thm:invariant-subspace}, $f_t$ occupies an invariant subspace $\mathcal{S}_{f_t}$. Under directional invariance (Proposition~\ref{prop:dimensional-bound}), $\mathcal{S}_{f_t} = \text{span}(\{\mathbf{d}_{f_t}\})$. Since $t$ canonically expresses $f_t$, its representation must lie in $\mathcal{S}_{f_t}$, hence $\mathbf{h}_t = \lambda_t \mathbf{d}_{f_t}$ for some scalar $\lambda_t$. Full proof in Appendix~\ref{app:proofs}.
\end{proof}

\begin{remark}[Tokens Provide Directions]
The Self-Reference Property has a simple practical implication: tokens tell you \emph{where to look}. To find the ``France'' feature in any representation, obtain the direction from the token ``France'' and project onto it. The feature is there---flowing through invariant directions by architectural necessity---and focusing on the right direction reveals it.
\end{remark}

\begin{remark}[Why ``Identity-Projection'']
The framework's name reflects this self-referential structure: projecting onto the direction provided by token $t$ focuses on $t$'s identity---the features that define $t$. The token serves as its own reference point. 
\end{remark}

\subsection{Empirical Validation of the Self-Reference Property}

Theorem~\ref{thm:self-reference} predicts that token representations 
align with the invariant directions of their associated features. 
We test this by comparing each class token's hidden state against 
the mean instance vector for each class, computed across attention 
heads.

Specifically, for each attention head and class $k$, we compute the 
cosine similarity between the class token direction $\mathbf{h}_{t_k}$ 
and the mean instance direction $\bar{\mathbf{h}}_k$ for the same 
class (within-class) versus other classes (between-class). If 
tokens provide feature directions as predicted, within-class 
similarity should consistently exceed between-class similarity.

Figure~\ref{fig:cross_within_token} confirms this prediction: across 
89.6\%-98.6\% of attention heads, the class token aligns more strongly 
with its own class instances than with any other class. This pattern validates the Self-Reference Property---tokens 
encode the same directional features present in contexts expressing 
those features, enabling zero-shot extraction of semantic directions.

%-----------------------------------------------------------------------------

%=============================================================================
% END OF THEORY SECTION
%=============================================================================
%=============================================================================
% END OF THEORY SECTION
%=============================================================================
\section{Methods for Extracting Directions}
\label{sec:methods}

The Self-Reference Property (Theorem~\ref{thm:self-reference}) has a practical consequence: since tokens provide feature directions and communicable features are linear (Theorem~\ref{thm:invariant-subspace}), a classifier built on token directions should generalize to instances it was never trained on.

A probe anchored to the ``France'' or ``South Korea'' direction (which we call a \textbf{class token} or \textbf{class prompt}) should correctly classify implicit prompts like ``the country of the Eiffel Tower'' or ``the country of the Han River'' (which we call an \textbf{instance prompt}) ---not because it memorized this mapping, but because the France feature flows through the same invariant direction regardless of the surface form.

We compare three approaches that leverage this insight, each with different tradeoffs between simplicity and expressiveness. All three methods perform classification using the Identity-Projection operator (Definition~\ref{def:identity-projection}), projecting the query onto normalized class directions. They differ only in how representations are transformed before projection. All probes operate on individual attention-head outputs, and we evaluate each attention head independently.

\paragraph{Zero-Shot Probe.}
The most direct instantiation of the Self-Reference Property is to use class-token hidden states themselves as class directions, without any training.
To isolate class-specific components, we mean-center the token representations to remove features shared across all classes (e.g., ``being a country''):
\begin{equation}
    \hat{\mathbf{d}}_k =
    \frac{\mathbf{h}_{t_k} - \bar{\mathbf{h}}}
    {\|\mathbf{h}_{t_k} - \bar{\mathbf{h}}\|}, 
    \quad 
    \bar{\mathbf{h}} = \frac{1}{K}\sum_{j=1}^K \mathbf{h}_{t_j}.
\end{equation}
Classification is then performed by projecting instance representations onto these normalized directions.
This procedure requires no training data.

\paragraph{Unsupervised Probe.}
We train a lightweight linear probe that aligns instance activations with class-token directions using only class tokens.
Given an instance activation $\mathbf{h}$ and class-token prototypes $\{\mathbf{c}_k\}$ from the same head, the probe learns a transformation $W$ by minimizing the contrastive objective
\begin{equation}
\mathcal{L} = -\sum_{k} \log 
\frac{\exp((W\mathbf{h}_{t_k})^\top \mathbf{c}_k / \tau)}
{\sum_j \exp((W\mathbf{h}_{t_k})^\top \mathbf{c}_j / \tau)} .
\end{equation}
Crucially, $W$ is trained using \emph{only} class tokens—no instance 
labels are used. The Self-Reference Property predicts that this 
transformation should generalize: if tokens and instances share 
invariant directions, orthogonalizing tokens automatically orthogonalizes 
instances. Table~\ref{tab:probe_results} confirms this—the unsupervised probe 
matches or exceeds zero-shot performance despite never observing instances 
during training.

\paragraph{Sparse Autoencoder Probe.}
We train sparse autoencoders (SAEs) on attention-head outputs to learn reusable latent features in an unsupervised manner~\citep{cunningham2023sparse, bricken2023monosemanticity, kissane2024attentionSAE}. 
Given an attention output $\mathbf{h}\in\mathbb{R}^{d}$, the encoder produces sparse latents
$\mathbf{z}=\mathrm{TopK}(W_{\mathrm{enc}}\mathbf{h}+\mathbf{b}_{\mathrm{enc}})$
and reconstructs $\hat{\mathbf{h}}=\mathbf{z}W_{\mathrm{dec}}+\mathbf{b}_{\mathrm{dec}}$.
SAEs are trained only on implicit instance activations (no class tokens). 
Consistent with our theory, many SAE features activate for both class tokens and semantically related instances, indicating that SAEs recover the same invariant directions as token-based probes.

\paragraph{Unified Classification.}
For all methods, let $\phi(\cdot)$ denote the transformation (mean-centering, $W$, or SAE encoder). Classification uses the Identity-Projection operator with respect to the chosen class vectors:
\begin{equation}
    \hat{k} = \argmax_{k \in [K]} \, \phi(\mathbf{h}(c))^\top \hat{\phi}_k, \quad \text{where } \hat{\phi}_k = \frac{\phi(\mathbf{h}_{t_k})}{\|\phi(\mathbf{h}_{t_k})\|}
\end{equation}
The query is not normalized, so projection magnitude reflects feature strength. Strong classification across all methods would confirm that the invariant structure from Theorem~\ref{thm:invariant-subspace} is recoverable through multiple independent approaches---providing converging evidence for directional invariance and the Self-Reference Property.

\begin{table}
% \centering
\begin{tabular}{l|c|cc}
\hline
\multirow{1}{*}{\textbf{Dataset}} & \multirow{1}{*}{\textbf{Classes}}& \multirow{1}{*}{\textbf{Instances/Class}} &  \\
\hline
Animals & 6 & 50   \\
Countries & 5 & 39 \\
Emotional Sentences & 6 & 60  \\
Literary Quotes & 6 & 50  \\
Cartoon Phrases & 6 & 50 \\
Languages & 6 & 50 \\
Fruits & 4 & 50 \\
Companies & 4 & 50 \\
\hline
\end{tabular}
\caption{Summary of task-specific prompt datasets used for probing invariant
feature directions across semantic, stylistic, linguistic, and affective
domains.}
\label{tab:datasets_probes}
\end{table}
 %" spanish" "spanish"

\section{Experiments}

\begin{table*}[t]
\centering
\begin{tabular}{c c | c c c c c c c c}
\hline
 &  & \multicolumn{8}{c}{Accuracy (\%)} \\
\cline{3-10}
Model & Method & Animals & Countries & C. Chars & Authors & Langs & Emotions & Fruits & Companies\\
\hline
\multirow{4}{*}{\textbf{LLaMA3-8B}}
 & SAE          & 92.05 & 75.60 & 54.28 & 70.29 & 93.92 & 38.32 & 64.14 & 77.32 \\
 & Unsupervised & 94.32 & 79.60 & 61.29 & 75.13 & 87.80 & 53.52  & 73.64 & 81.32 \\
 & Zero-Shot    & 84.22 & 82.97 & 54.46 & 62.95 & 89.12 & 52.43 & 59.36 & 80.26 \\
 & Text Output       & 99.67 & 79.48 & 42.00 & 68.75 & 94.00 & 68.40 & 49.00 & 88.50  \\
\hline
\multirow{4}{*}{\textbf{Llama3.2-3B}}
 & SAE          & 78.81 & 82.43 & 53.09 & 49.29 & 92.63 & 35.76 & 61.26 & 72.63 \\
 & Unsupervised & 90.04 & 78.47 & 45.69 & 59.05 & 90.98 & 54.59 & 63.46 & 76.26   \\
 & Zero-Shot    & 72.24 & 80.36 & 43.96 & 53.68 & 91.50 & 46.73 & 60.63 & 74.50   \\
 & Text Output       & 97.67 & 46.15 & 33.33 & 39.37 & 62.50 & 62.20 & 51.00 & 88.50 \\
\hline
\multirow{4}{*}{\textbf{Mistral}}
 & SAE          & 82.10 & 74.48 & 49.95 & 52.69 & 97.62 & 43.64 & 56.52 & 75.72 \\
 & Unsupervised & 85.98 & 88.55 & 52.33 & 54.51 & 97.86 & 51.34 & 62.04 & 84.51    \\
 & Zero-Shot    & 75.71 & 81.37 & 47.16 & 46.23 & 98.15 & 50.15 & 65.18 & 79.03 \\
 & Text Output       & 92.33 & 33.85 & 33.33 & 40.00 & 98.67 & 75.20 & 84.50 & 89.00\\
\hline
\multirow{4}{*}{\textbf{GPT2-Small}}
 & SAE          & 26.65 & 39.62 & 20.21 & 15.73 & 84.02 & 17.69 & 30.83 & 52.59 \\
 & Unsupervised & 26.29 & 56.67 & 28.14 & 21.76 & 83.37 & 21.32 & 37.81 & 63.33    \\
 & Zero-Shot    & 25.39 & 37.62 & 20.83 & 20.11 & 88.11 & 17.37 & 33.90 & 29.06   \\
 & Text Output       & 35.33 & 25.64 & 17.00 & 12.50 & 15.67 & 49.80 & 37.50 & 29.50   \\
\hline
\end{tabular}
\caption{Classification accuracy across methods and models, showing the accuracy of the single best-performing (top-1) attention head for each method. Supervised probes were omitted due to their tendency to overfit. Text output denotes a zero-shot conditional log-likelihood baseline that scores each class by the sum of token log-likelihoods given the prompt.}
\end{table*}
\label{tab:probe_results}

\subsection{Datasets}
\label{sec:datasets}

We evaluate mainly across all the attention head outputs, on eight classification tasks spanning diverse semantic domains (Table~\ref{tab:datasets_probes}): taxonomic (Animals), geographic (Countries), affective (Emotional Sentences \citep{ghazi2015detecting}), stylistic (Literary Quotes, Cartoon Phrases), and linguistic (Languages \citep{artetxe2020cross}) and descriptive (Fruits, Companies).

For each task, we obtained a list of \emph{implicit} sentences that express the class without mentioning it (e.g., ``the country of the Eiffel Tower''). The full description of the datasets can be found in the Appendix~\ref{app:dataset}. 

\begin{tcolorbox}[
  colback=gray!15,
  colframe=gray!70,
  boxrule=0.6pt,
  arc=2pt,
  left=6pt,
  right=6pt,
  top=6pt,
  bottom=6pt
]
``The fruit associated with Newton's discovery of gravity'' $\xrightarrow{}$ \textbf{Apple}
\end{tcolorbox}

The Fruits and Companies datasets test behavior under polysemy 
(Section~\ref{sec:polysemy}). Our geometric analysis uses LLaMA3-8B, while classification results span four model families (Table~\ref{tab:probe_results}) to demonstrate generalization.

\subsection{Classification Results}
We do not aim to determine which method is best, but rather whether invariant directions alone support zero-shot and unsupervised classification.
Table~\ref{tab:probe_results} presents classification accuracy across 
methods, models, and semantic domains. Several findings emerge.

\paragraph{All methods achieve strong classification.}
Zero-shot, unsupervised, and SAE-based classification all distinguish 
between classes of similar semantic meaning, confirming that the 
invariant structure predicted by Theorem~\ref{thm:invariant-subspace} 
is recoverable through multiple independent approaches. The shared 
success across methods that differ only in how directions are 
extracted---directly from tokens, via learned orthogonalization, 
or through sparse decomposition---provides converging evidence for 
directional invariance.

\paragraph{Learned transformations improve separability.}
The unsupervised method consistently outperforms zero-shot classification, 
indicating that while token directions capture the correct semantic 
structure, learned orthogonalization improves class separability. 
This aligns with our motivation: raw token directions may share 
features (e.g., all country tokens encode ``being a country''), 
and the contrastive loss learns to disentangle these.

\begin{figure*}[t]
\centering
\begin{tabular}{cc}
    \includegraphics[width=0.48\linewidth]{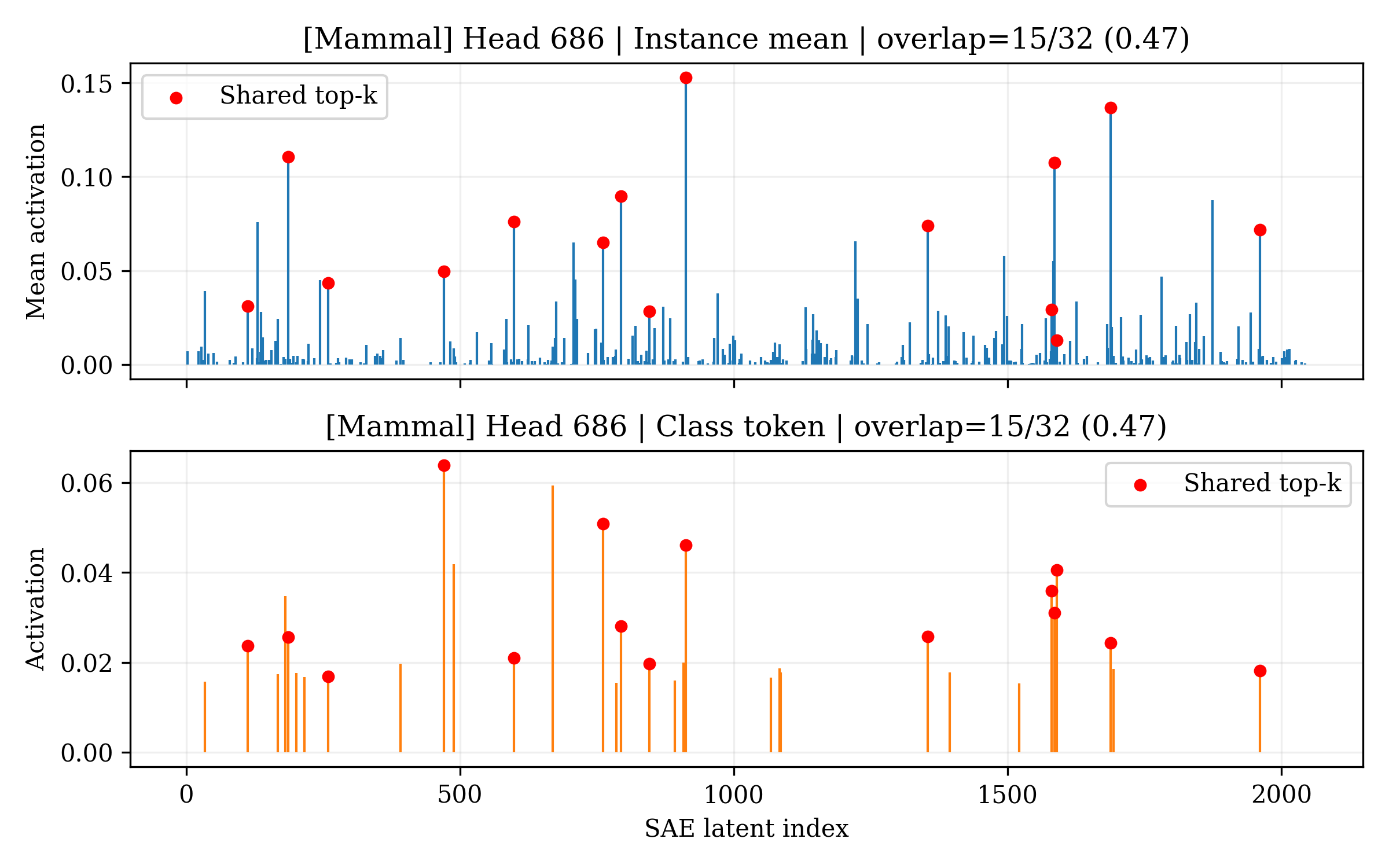} &
    \includegraphics[width=0.48\linewidth]{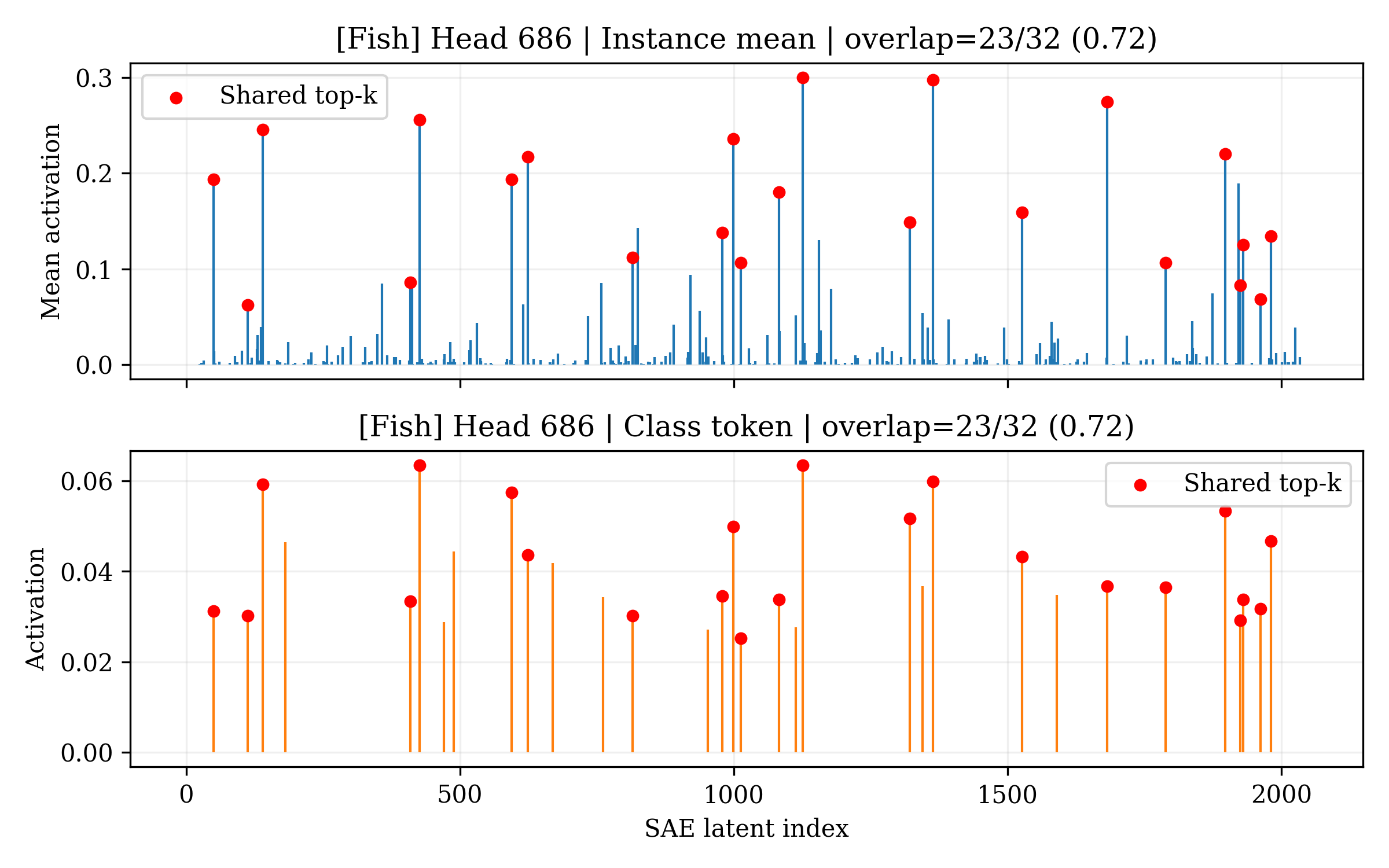}
\end{tabular}
\caption{SAE shared peak analysis across classes. We compare top-$k$ SAE dimensions of a class token with top-$k$ SAE dimensions derived from its instances. Red markers denote shared dimensions, revealing shared invariant features between tokens and contexts.}
\label{fig:overlap_sae}
\end{figure*}

\begin{figure}[t]
\centering
\includegraphics[width=0.95\columnwidth]{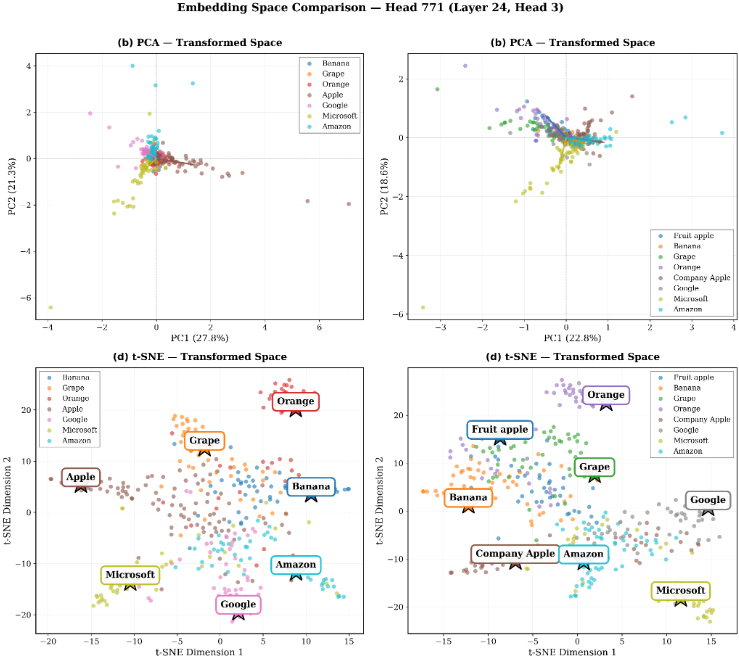}
\caption{PCA and t-SNE projections of embeddings for the polysemous word Apple (fruit vs. company) using domain-specific tokens. (Left) A single "Apple" token representing both classes (69\% accuracy). (Right) Separate tokens "Fruit apple" and "Company Apple" treated as distinct classes (65.7\% accuracy). Both methods successfully disentangle the two senses, with instances clustering around their respective class prototypes.}
\label{fig:polysemanticity}
\end{figure}

\paragraph{SAEs recover communicable features.}
SAE-based classification achieves competitive performance, 
demonstrating that features learned through reconstruction objectives 
correspond to the communicable features defined in our framework. 
This validates Theorem~\ref{thm:self-reference} from a different 
angle: SAEs trained on diverse contexts learn directions that align 
with class token directions, confirming that both methods recover 
the same underlying invariant structure. Moreover, this establishes 
a connection between our framework and SAE interpretability---the 
Identity-Projection operator can serve as a tool for analyzing 
which SAE features correspond to semantically meaningful directions.

\paragraph{Generalization across models.}
Consistent performance across LLaMA3-8B, Mistral-7B, GPT2-Small, and 
LLaMA3.2-3B suggests that directional invariance is not an artifact 
of a particular architecture or scale, but reflects a general property 
of transformer representations.

\subsection{Polysemy as the inverse of \textit{Partial Synonyms}}
\label{sec:polysemy}

Polysemy provides a natural test case: ``Apple'' encodes both fruit and company meanings as superposed directions. Table~\ref{tab:probe_results} shows both Fruits and Companies datasets classify accurately despite sharing this token. To test explicit disentanglement, we combine both datasets and compare using a single ``Apple'' token versus distinct tokens (``Fruit apple'', ``Company Apple''). Both strategies achieve comparable accuracy (Fig.\ref{fig:polysemanticity} 69\% vs.\ 65.7\%), confirming that both meanings co-exist in superposition---context modulates magnitude, not selection.

This geometric view inverts naturally: ``Dog'' and ``Cat'' are distinct tokens sharing a component along $\mathbf{d}_{\text{mammal}}$---\emph{partial synonyms} with respect to that feature. Polysemy (one token, multiple directions) and partial synonymy (multiple tokens, shared direction) are two sides of the same coin.

\subsection{Tokens Predict Feature Activation in SAEs}

We trained sparse autoencoders (SAEs) for each attention head, following the approach of \citet{kissane2024attentionSAE}, using only implicit prompts. Notably, class tokens were never introduced during training, ensuring that the SAEs learned to extract features purely from context, without direct supervision. The results are nothing short of remarkable: despite this, the SAEs still manage to retain enough structural information to reliably distinguish between classes (see Table~\ref{tab:probe_results}).

What is even more striking is how much class instances share with their corresponding class vectors. Figure~\ref{fig:overlap_sae} illustrates this phenomenon in the animal dataset, where we observe an astonishing overlap between the top-$k$ SAE dimensions of a single class token (selected by activation magnitude) and the top-$k$ most frequent SAE dimensions across its class instances. With top-$k{=}32$, the intersection between these two sets is substantial: mammals share 15/32 features, fish 23/32, reptiles 25/32, and birds 22/32. This consistent pattern emerges across multiple datasets and holds true as model size increases.

Crucially, this high overlap indicates that a class token activates the same invariant feature subspace that repeatedly emerges across diverse contextual instances of that class. In other words, tokens and their contexts converge onto a shared set of stable semantic features, suggesting that tokens themselves can serve as powerful predictors of the semantic organization learned by SAEs and transformers.

\section{Conclusion}

We have shown that linear interpretability in transformers is a structural consequence of architecture: the Invariant Subspace Necessity theorem establishes that any semantic feature communicated through linear interfaces must occupy a context-invariant subspace. This complements prior work on training dynamics \citep{jiang2024origins}—architecture constrains \emph{what} representations must look like, while optimization determines \emph{how} they are learned.

The Self-Reference Property offers a practical application: tokens directly encode their associated feature directions, enabling zero-shot semantic classification without labeled data. Conversely, instances that are not easily classifiable in this way may suggest the presence of a feature that hasn't collapsed into an invariant form. Experiments across eight domains and four model families confirm that zero-shot, a novel unsupervised probe, and SAE-based methods all recover the same directional structure.

\paragraph{Limitations.} We focus on features communicated through linear interfaces; features operating through nonlinear gating (e.g., QK routing) may not exhibit the same invariance. We also do not characterize when invariant subspaces collapse to single directions versus higher-dimensional structures.

\paragraph{Future Directions.} The Self-Reference Property suggests paths toward scalable, unsupervised circuit discovery, and the connection between SAE features and token directions offers new evaluation criteria for dictionary learning.

Interpretability methods succeed not despite transformer complexity, but because of how that complexity is structured—transformers are interpretable by design.

\section*{Impact Statement}

This paper presents work whose goal is to advance the field
of Machine Learning. There are many potential societal
consequences of our work, none which we feel must be
specifically highlighted here.

% In the unusual situation where you want a paper to appear in the
% references without citing it in the main text, use \nocite
\bibliography{references}  % no .bib extension
% \bibliography{example_paper}
\bibliographystyle{icml2026}

%%%%%%%%%%%%%%%%%%%%%%%%%%%%%%%%%%%%%%%%%%%%%%%%%%%%%%%%%%%%%%%%%%%%%%%%%%%%%%%
% APPENDIX
%%%%%%%%%%%%%%%%%%%%%%%%%%%%%%%%%%%%%%%%%%%%%%%%%%%%%%%%%%%%%%%%%%%%%%%%%%%%%%%
\newpage
\appendix
\onecolumn

\section{Proofs of Main Results}
\label{app:proofs}

%-----------------------------------------------------------------------------

\subsection{Proof of Theorem~\ref{thm:invariant-subspace} (Invariant Subspace Necessity)}

\begin{proof}
Let $W \in \mathbb{R}^{m \times d}$ be the linear interface (either $W_U$ for unembedding or $W_O W_V$ for attention). By Definition~\ref{def:communicable}, there exists a linear functional $\boldsymbol{\phi} \in \mathbb{R}^m$ such that the $f$-relevant output is:
\begin{equation}
    o_f(c) = \boldsymbol{\phi}^\top W \mathbf{h}(c) = \mathbf{w}_f^\top \mathbf{h}(c)
\end{equation}
where $\mathbf{w}_f = W^\top \boldsymbol{\phi} \in \mathbb{R}^d$.

\textbf{Step 1: Equivalence class structure.} Partition contexts by their required $f$-output. For contexts $c_i, c_j$ requiring identical outputs ($o_f(c_i) = o_f(c_j)$):
\begin{equation}
    \mathbf{w}_f^\top \mathbf{h}(c_i) = \mathbf{w}_f^\top \mathbf{h}(c_j) \implies \mathbf{h}(c_i) - \mathbf{h}(c_j) \in \ker(\mathbf{w}_f^\top) = \mathbf{w}_f^\perp
\end{equation}
That is, contexts with the same feature value can differ arbitrarily in directions orthogonal to $\mathbf{w}_f$, but must agree on their projection onto $\mathbf{w}_f$.

\textbf{Step 2: Decomposition.} For any context $c$ expressing $f$, decompose:
\begin{equation}
    \mathbf{h}(c) = \mathbf{h}_f(c) + \boldsymbol{\eta}(c)
\end{equation}
where $\mathbf{h}_f(c)$ is the $f$-relevant component satisfying $\mathbf{w}_f^\top \mathbf{h}_f(c) = o_f(c)$, and $\boldsymbol{\eta}(c) \in \mathbf{w}_f^\perp$ captures $f$-irrelevant variation. This decomposition is unique given the constraint.

\textbf{Step 3: Invariant subspace construction.} The $f$-relevant components $\{\mathbf{h}_f(c_i)\}$ must satisfy $\mathbf{w}_f^\top \mathbf{h}_f(c_i) = o_f(c_i)$ for all $i$. Define:
\begin{equation}
    \mathcal{S}_f = \mathrm{span}(\{\mathbf{h}_f(c) : c \text{ expresses } f\})
\end{equation}

This subspace satisfies the required properties:
\begin{enumerate}
    \item \textbf{Context-invariance:} $\mathcal{S}_f$ is determined by feature $f$ and the linear interface $W$ (through $\mathbf{w}_f$), not by any individual context.
    \item \textbf{Completeness:} By construction, $\mathcal{S}_f$ contains all $f$-relevant information from any context expressing $f$.
    \item \textbf{Minimality:} The subspace is the span of $f$-relevant components, excluding $f$-irrelevant variation in $\mathbf{w}_f^\perp$.
\end{enumerate}

The key insight is that the linear interface $W$ acts as a bottleneck: to produce consistent outputs for the same feature value across different contexts, the model must encode $f$-relevant information in a subspace that projects identically through $\mathbf{w}_f$. This subspace exists by construction and is independent of which specific contexts are considered.
\end{proof}

\subsection{Proof of Proposition~\ref{prop:dimensional-bound} (Dimensional Bound)}

\begin{proof}
By Theorem~\ref{thm:invariant-subspace}, there exists a context-invariant subspace $\mathcal{S}_f$ containing all $f$-relevant information. We show that linear separability of $K$ feature values constrains $\dim(\mathcal{S}_f) \leq K-1$.

Let $\{\mathbf{h}_{y_k}\}_{k=1}^K \subset \mathcal{S}_f$ be representative embeddings (e.g., class centroids) for each feature value $y_k$. For these to be linearly separable, they must be affinely independent: no $\mathbf{h}_{y_k}$ lies in the affine hull of the others.

The affine hull of $K$ points has dimension at most $K-1$. To see this, define centered representations:
\begin{equation}
    \tilde{\mathbf{h}}_{y_k} = \mathbf{h}_{y_k} - \bar{\mathbf{h}}, \quad \text{where } \bar{\mathbf{h}} = \frac{1}{K}\sum_{k=1}^K \mathbf{h}_{y_k}
\end{equation}
These centered vectors satisfy $\sum_k \tilde{\mathbf{h}}_{y_k} = \mathbf{0}$, so at most $K-1$ are linearly independent. Thus $\mathrm{span}(\{\tilde{\mathbf{h}}_{y_k}\}) \leq K-1$.

Since linear separability requires the $f$-relevant subspace to distinguish all $K$ values, and this can be achieved in $K-1$ dimensions, the minimal invariant subspace satisfies $\dim(\mathcal{S}_f) \leq K-1$.

For binary features ($K=2$), this bound yields $\dim(\mathcal{S}_f) \leq 1$. Combined with the requirement that $\mathcal{S}_f$ be non-trivial (the feature must be detectable), we have $\dim(\mathcal{S}_f) = 1$, establishing directional invariance.
\end{proof}

\section{Capacity Constraints and Feature Factorization}
\label{app:capacity}

\subsection{Detailed Proof of Proposition~\ref{prop:capacity}}

We provide a complete argument that capacity constraints in transformers 
force factorized representations with shared, invariant feature directions.

\begin{proposition*}[Capacity Constraint Implies Feature Sharing, restated]
Let $\mathcal{M}$ be a transformer with vocabulary $|\mathcal{V}|$ and 
hidden dimension $d$, where $|\mathcal{V}| \gg d$. Under the following conditions:
\begin{enumerate}
    \item \textbf{Linear readout:} Token logits are computed via 
          $\textup{logit}_t = \mathbf{w}_t^\top \mathbf{h}(c)$
    \item \textbf{Sparse activation:} Each context $c$ expresses a small 
          subset of all possible features
    \item \textbf{Shared features:} Multiple tokens share semantic attributes
\end{enumerate}
Then the optimal representation factorizes tokens as:
\begin{equation}
    \mathbf{w}_t = \sum_{f \in F_t} \alpha_{t,f} \, \mathbf{d}_f
\end{equation}
where $F_t$ is the feature set for token $t$, and $\{\mathbf{d}_f\}$ are 
shared feature directions with $|F| \ll |\mathcal{V}|$.
\end{proposition*}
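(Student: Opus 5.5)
The plan is to make the informal statement precise enough to admit an argument, and then argue via a rank/compression comparison. Let $W_U \in \mathbb{R}^{|\mathcal{V}|\times d}$ have rows $\mathbf{w}_t$. The first observation is purely linear-algebraic: since $\mathrm{rank}(W_U)\le d \ll |\mathcal{V}|$, the rows cannot be mutually orthogonal, and every $\mathbf{w}_t$ already lies in the row space $\mathrm{span}\{\mathbf{d}_1,\dots,\mathbf{d}_r\}$ for some basis with $r\le d$. This yields a trivial factorization $\mathbf{w}_t=\sum_f \alpha_{t,f}\mathbf{d}_f$ with $|F|\le d\ll|\mathcal{V}|$. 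The substantive content to prove is therefore \emph{sparsity and semantic alignment}: that optimal $\alpha_{t,\cdot}$ is supported on the attribute set $F_t$ from condition (iii), rather than being dense.

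To get this, I would fix a generative model for contexts. Each context $c$ activates a sparse set $A(c)\subseteq F$ with $|A(c)|\le s$ (condition (ii)), and the target logit of token $t$ depends only on $F_t\cap A(c)$, say $\ell_t(c)=\sum_{f\in F_t\cap A(c)}\beta_{t,f}$. I would then compare two families of solutions under expected squared logit error, or a cross-entropy surrogate.

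\textbf{(a) Unique encodings.} Each token gets its own direction. With $|\mathcal{V}|>d$, some pair must overlap substantially; by the Welch bound, $\max_{t\neq t'}|\cos(\mathbf{w}_t,\mathbf{w}_{t'})|\ge\sqrt{(|\mathcal{V}|-d)/(d(|\mathcal{V}|-1))}$. This forces interference on contexts where $t$ should fire and $t'$ should not, at a rate that does not exploit semantic structure.

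\textbf{(b) Factorized encodings.} Choose $|F|\le d$ nearly orthogonal $\mathbf{d}_f$, set $\mathbf{h}(c)=\sum_{f\in A(c)}\mathbf{d}_f$, and set $\mathbf{w}_t=\sum_{f\in F_t}\beta_{t,f}\mathbf{d}_f$. This reproduces $\ell_t(c)$ exactly when the $\mathbf{d}_f$ are orthogonal. When $|F|$ slightly exceeds $d$ (superposition), the error is bounded by the interference $\sum_{f\neq g\in A(c)}|\mathbf{d}_f^\top\mathbf{d}_g|$, which is small because $|A(c)|\le s$.

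Concluding that (b) achieves strictly lower loss than (a) gives optimality of the factorized form. Finally, substituting the factorized form gives $\mathrm{logit}_t=\sum_{f\in F_t}\alpha_{t,f}\,\mathbf{d}_f^\top\mathbf{h}(c)$. Each $\mathbf{d}_f^\top\mathbf{h}(c)$ is then a linear readout of $f$ shared by all tokens containing $f$, which makes $f$ communicable in the sense of Definition~\ref{def:communicable}. Theorem~\ref{thm:invariant-subspace} then supplies the context-invariant subspace.

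The main obstacle is the claim that the factorized solution is \emph{optimal} rather than merely feasible and cheaper. Arbitrary $W_U$ is also rank $\le d$, so an adversarial optimum could use a rotated basis with dense $\alpha$. I would handle this in two steps. First, I would show that the optimal loss depends on $W_U$ only through the Gram structure induced on the attribute-determined targets, so optimal solutions are identifiable only up to an invertible change of basis of $\mathrm{span}\{\mathbf{d}_f\}$. Second, I would argue that among this equivalence class the sparse, attribute-aligned basis is selected, either by an added sparsity or weight-decay regularizer or by the requirement that the directions be reused across contexts. I expect the honest statement to be "optimal up to basis change, and unique under a sparsity prior", and I would state this qualification explicitly.
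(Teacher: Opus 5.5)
Your plan follows the paper's own four steps: a dimension count, co-occurrence-weighted interference, unique versus factorized encodings, and then reading each factor as a communicable feature so that Theorem~\ref{thm:invariant-subspace} applies. Your rank remark and explicit generative model make it sharper. But the paper's Step~3 only asserts that the factorized strategy is cheaper, and you inherit that gap: showing that (b) beats (a) does not show the optimum is factorized, because (a) and (b) do not exhaust the possible $W_U$. The Welch bound does not even give a loss lower bound for (a). Two non-parallel rows $\mathbf w_t,\mathbf w_{t'}$ can always be separated by a suitable $\mathbf h(c)$, so pairwise coherence (here only of order $d^{-1/2}$) forces no loss by itself. Unavoidable loss comes from how $\mathrm{col}(W_U)\subseteq\mathbb R^{|\mathcal V|}$ sits relative to the target logits. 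Your repair also targets the wrong non-uniqueness. The reparametrization $W_U\mapsto W_UR$, $\mathbf h\mapsto R^{-1}\mathbf h$ sends $\sum_{f\in F_t}\alpha_{t,f}\mathbf d_f$ to $\sum_{f\in F_t}\alpha_{t,f}R^\top\mathbf d_f$, which is still supported on $F_t$. Since the statement is existential in the directions, dense coefficients in another basis are harmless, and no sparsity prior is needed to select an aligned basis.

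The missing step is a column-space argument that your own model already supports. Your targets form $L=BA$ with $B_{tf}=\beta_{t,f}\mathbbm{1}[f\in F_t]$ and $A_{fc}=\mathbbm{1}[f\in A(c)]$. Take squared loss with free hidden states $H=[\mathbf h(c)]_c$, and let $P_{W_U}$ and $P_B$ denote the orthogonal projectors onto $\mathrm{col}(W_U)$ and $\mathrm{col}(B)$. The best loss at fixed $W_U$ is $\|(I-P_{W_U})L\|_F^2$, so every optimal $W_UH$ is a best rank-$\le d$ approximation of $L$ (Eckart--Young), with column space inside $\mathrm{col}(L)\subseteq\mathrm{col}(B)$. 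If $\mathrm{rank}(L)\ge d$, then $W_UH$ has rank $d$, which forces $\mathrm{col}(W_U)\subseteq\mathrm{col}(B)$. Hence $W_U=BN$ and $\mathbf w_t=\sum_{f\in F_t}\beta_{t,f}\mathbf n_f$, with $\mathbf n_f^\top$ the rows of $N$. This is the claimed factorization, including in the superposition regime $|F|>d$, and it needs no Welch bound. If $\mathrm{rank}(L)<d$, the only residual freedom is $W_U=BN+Z$ with $ZH=0$, i.e.\ components of $\mathbf w_t$ orthogonal to every $\mathbf h(c)$. Replacing $W_U$ by $P_BW_U$ leaves all logits unchanged and strictly lowers $\|W_U\|_F$ when $Z\neq0$. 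So weight decay is the right tool, but for this reason rather than for basis selection. This also makes explicit that the real hypothesis is the additive target structure $L=BA$; without it the optimum is a generic truncated SVD, with no reason for sparse, attribute-aligned factors. A smaller correction: the logit error in (b) is $\sum_{f\in F_t}\sum_{g\in A(c)\setminus\{f\}}\beta_{t,f}\,\mathbf d_f^\top\mathbf d_g$, not a sum over pairs inside $A(c)$. You therefore need $|F_t|$ small as well as $s$.
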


\begin{proof}
We proceed in four steps, establishing that factorization is optimal under 
capacity constraints when features are sparse.

\paragraph{Step 1: Dimensional constraint.}
The hidden dimension $d$ bounds the number of orthogonal directions available. 
With typical values $|\mathcal{V}| \approx 50{,}000$ and $d \approx 4{,}000$, 
at most $d$ tokens can have mutually orthogonal representations in $W_U$. 
The remaining $|\mathcal{V}| - d$ tokens must share directions with others, 
which creates potential interference in the readout.

\paragraph{Step 2: Interference cost depends on co-occurrence.}
Suppose tokens $t_1$ and $t_2$ share a direction $\mathbf{d}$, i.e., both 
$\mathbf{w}_{t_1}$ and $\mathbf{w}_{t_2}$ have non-zero projections onto 
$\mathbf{d}$. Interference occurs when both tokens are relevant predictions 
for a context $c$: the hidden state $\mathbf{h}(c)$ cannot independently 
modulate the logits for $t_1$ and $t_2$ along their shared direction. 
The expected interference cost is:
\begin{equation}
    \mathcal{L}_{\text{interference}}(t_1, t_2) \propto 
    P(t_1, t_2 \text{ both relevant in } c) \cdot 
    |\mathbf{w}_{t_1}^\top \mathbf{d}| \cdot |\mathbf{w}_{t_2}^\top \mathbf{d}|
\end{equation}
For sparse features---those with low co-occurrence probability across the 
corpus---this interference cost is small.

\paragraph{Step 3: Factorization minimizes total representation cost.}
Consider the total cost of a representation as the sum of dimensional cost 
(number of directions used) and interference cost. Compare two strategies:

\begin{itemize}
    \item \emph{Unique directions:} Each token $t$ receives a dedicated 
          direction $\mathbf{d}_t$. This eliminates interference but requires 
          $|\mathcal{V}|$ dimensions---infeasible when $|\mathcal{V}| \gg d$.
    
    \item \emph{Factorized features:} Tokens share $|F|$ feature directions, 
          where each token $t$ is represented as a sparse combination of 
          features $f \in F_t$. This requires only $|F|$ dimensions plus 
          interference costs proportional to feature co-occurrence.
\end{itemize}

When semantic features are sparse and $|F| \ll |\mathcal{V}|$, the factorized 
strategy achieves lower total cost. Intuitively, ``mammal'' and ``European'' 
rarely co-occur as the dominant semantic features of a context, so tokens 
sharing these directions (e.g., ``cat'' and ``France'') experience minimal 
interference despite their shared structure.

\paragraph{Step 4: Each factor is communicable and invariant.}
Under factorization, the logit computation for token $t$ decomposes as:
\begin{equation}
    \text{logit}_t = \mathbf{w}_t^\top \mathbf{h}(c) 
    = \sum_{f \in F_t} \alpha_{t,f} \, \mathbf{d}_f^\top \mathbf{h}(c)
    = \sum_{f \in F_t} \alpha_{t,f} \, \phi_f(c)
\end{equation}
where $\phi_f(c) = \mathbf{d}_f^\top \mathbf{h}(c)$ is the activation of 
feature $f$ in context $c$. For the model to correctly predict token 
probabilities, the hidden state $\mathbf{h}(c)$ must encode each relevant 
feature $f$ such that $\phi_f(c)$ reflects the appropriate magnitude. 
This imposes three requirements on each feature direction $\mathbf{d}_f$:

\begin{enumerate}
    \item \textbf{Linearly decodable:} The feature activation is extracted 
          via a linear projection $\mathbf{d}_f^\top \mathbf{h}(c)$.
    
    \item \textbf{Multi-context:} The same direction $\mathbf{d}_f$ is used 
          across all tokens $t$ with $f \in F_t$ and all contexts $c$ where 
          feature $f$ is relevant.
    
    \item \textbf{Context-invariant:} The direction $\mathbf{d}_f$ is fixed 
          in $W_U$ and does not depend on $c$; only the magnitude 
          $\phi_f(c)$ varies with context.
\end{enumerate}

These three properties---linear decodability, multi-context usage, and 
context-invariance---are precisely the conditions of Theorem~\ref{thm:invariant-subspace}. 
Therefore, each semantic factor $\mathbf{d}_f$ occupies an invariant subspace 
in the representation geometry.
\end{proof}

\paragraph{Connection to prior work.}
This result connects to the superposition hypothesis~\citep{elhage2022toy}, 
which demonstrates that neural networks exploit high-dimensional geometry 
to represent more features than dimensions when features are sparse. Our 
contribution is to show that this superposition, combined with the linear 
readout constraint of transformers, forces the specific geometric structure 
of invariant feature directions. While~\citet{achille2018emergence} establish 
that information-minimal representations are necessarily invariant, we 
characterize the \emph{form} this invariance takes: fixed directions with 
context-dependent magnitudes.

\subsection{Proof of Proposition~\ref{prop:layernorm-preservation}}

\begin{proposition}[LayerNorm preserves directional invariance]
\label{prop:layernorm-preservation}
Let $h(c) = \alpha_f(c)\, d_f + \eta(c)$ where $d_f \in \mathbb{R}^d$ is a 
context-invariant feature direction and $\alpha_f(c) \in \mathbb{R}$ is a 
context-dependent coefficient. If $d_f \notin \mathrm{span}(\mathbbm{1})$, 
then there exists a context-invariant direction $\tilde{d}_f$ such that
\[
\mathrm{LN}(h(c)) = \tilde{\alpha}_f(c)\, \tilde{d}_f + \tilde{\eta}(c) + \beta,
\]
where $\tilde{\alpha}_f(c) := \alpha_f(c)/\sigma(h(c))$ and $\beta$ is the 
learned bias.
\end{proposition}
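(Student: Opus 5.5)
We prove Proposition~\ref{prop:layernorm-preservation} by writing LayerNorm as a composition of linear maps and a context-dependent scalar, then tracking the image of $d_f$. We use the convention
\[
\mathrm{LN}(h) = \gamma \odot \frac{P h}{\sigma(h)} + \beta, \qquad P = I - \tfrac{1}{d}\mathbbm{1}\mathbbm{1}^\top,
\]
where $P$ is the mean-centering projector, $\sigma(h) = \|Ph\|/\sqrt{d}$ (up to the $\epsilon$ term, which we absorb into $\sigma$), and $\gamma$ is the learned gain. Write $\Gamma = \mathrm{diag}(\gamma)$. Then $\mathrm{LN}(h) = \sigma(h)^{-1}\,\Gamma P h + \beta$, and $\Gamma P$ is a fixed linear map that does not depend on the context.

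First, substitute the decomposition of Corollary~\ref{cor:decomposition}, $h(c) = \alpha_f(c) d_f + \eta(c)$, and use linearity of $\Gamma P$:
\[
\mathrm{LN}(h(c)) = \frac{\alpha_f(c)}{\sigma(h(c))}\,\Gamma P d_f + \frac{1}{\sigma(h(c))}\,\Gamma P \eta(c) + \beta .
\]
Now set $\tilde d_f := \Gamma P d_f$, $\tilde\alpha_f(c) := \alpha_f(c)/\sigma(h(c))$, and $\tilde\eta(c) := \Gamma P\eta(c)/\sigma(h(c))$. This gives the claimed form.

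Two points remain.
\begin{enumerate}
\item $\tilde d_f$ is context-invariant, since it depends only on $d_f$, $P$ and $\gamma$, all of which are fixed parameters (Assumption~\ref{ass:architecture}, shared parameters). All context dependence sits in the scalar $1/\sigma(h(c))$, which rescales magnitude but not direction. This is the same mechanism as $W(\alpha d_f) = \alpha W d_f$ noted after Corollary~\ref{cor:decomposition}.
\item $\tilde d_f \neq 0$, so that it is a genuine direction. Because $\ker P = \mathrm{span}(\mathbbm{1})$, the hypothesis $d_f \notin \mathrm{span}(\mathbbm{1})$ gives $Pd_f \neq 0$. We additionally need $\Gamma P d_f \neq 0$, which holds whenever $\gamma$ has no zero entries on the support of $Pd_f$. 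We state this as a mild genericity condition on $\gamma$ (for example, all $\gamma_i \neq 0$).
\end{enumerate}

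The main obstacle is that $\tilde\eta(c)$ need not remain orthogonal to $\tilde d_f$. The map $\Gamma P$ is not an isometry, so orthogonality $\eta \perp d_f$ is generally not preserved. The proposition as stated does not require orthogonality, so the decomposition is already valid. If an orthogonal form is wanted, we would reproject: move the component $\langle \tilde\eta(c), \hat{\tilde d}_f\rangle \hat{\tilde d}_f$ into the coefficient. This leaves $\tilde d_f$ unchanged and context-invariant, but it changes $\tilde\alpha_f(c)$ by an additive context-dependent term. We would therefore state the result with $\tilde\eta$ not necessarily orthogonal, matching the coefficient $\alpha_f(c)/\sigma(h(c))$ in the statement, and remark that orthogonality is recovered when $\gamma \propto \mathbbm{1}$ and $\eta(c)$ has zero mean.
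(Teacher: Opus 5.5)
Your proof is correct and is essentially the paper's argument: apply the fixed mean-centering projector, divide by the positive scalar $\sigma(h(c))$, apply the elementwise gain, and set $\tilde d_f = \gamma \odot \Pi_{\mathbbm{1}^\perp} d_f$ and $\tilde\eta(c) = \gamma \odot \Pi_{\mathbbm{1}^\perp}\eta(c)/\sigma(h(c))$. Your extra requirement that $\gamma$ not vanish on the support of $\Pi_{\mathbbm{1}^\perp} d_f$ is a genuine refinement: the paper only shows $\Pi_{\mathbbm{1}^\perp} d_f \neq 0$ and never checks that $\tilde d_f$ itself is nonzero.
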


\begin{proof}
We analyze each component of Layer Normalization.

\paragraph{Step 1: Mean-centering.}
Define the mean-centering projection
\[
\Pi_{\mathbbm{1}^\perp} := I - \frac{1}{d}\mathbbm{1}\mathbbm{1}^\top,
\]
where $\mathbbm{1} \in \mathbb{R}^d$ is the all-ones vector. Applying this 
linear operator to our decomposition:
\[
\Pi_{\mathbbm{1}^\perp} h(c) 
= \alpha_f(c)\, \Pi_{\mathbbm{1}^\perp} d_f + \Pi_{\mathbbm{1}^\perp} \eta(c).
\]
Since $d_f \notin \mathrm{span}(\mathbbm{1})$ by assumption, the projected 
direction $d_f' := \Pi_{\mathbbm{1}^\perp} d_f \neq 0$.

\paragraph{Step 2: Scalar normalization.}
LayerNorm divides by the standard deviation $\sigma(h(c)) > 0$:
\[
\frac{\Pi_{\mathbbm{1}^\perp} h(c)}{\sigma(h(c))}
= \frac{\alpha_f(c)}{\sigma(h(c))}\, d_f' 
  + \frac{\Pi_{\mathbbm{1}^\perp} \eta(c)}{\sigma(h(c))}.
\]
Since $\sigma(h(c))$ is a positive scalar, this operation preserves the 
subspace $\mathrm{span}(d_f')$.

\paragraph{Step 3: Learned affine transformation.}
LayerNorm applies elementwise scaling by $\gamma \in \mathbb{R}^d$ and 
adds bias $\beta \in \mathbb{R}^d$:
\[
\mathrm{LN}(h(c)) 
= \frac{\alpha_f(c)}{\sigma(h(c))}\, (\gamma \odot d_f')
  + \frac{\gamma \odot \Pi_{\mathbbm{1}^\perp} \eta(c)}{\sigma(h(c))} 
  + \beta.
\]
Define the transformed feature direction
\[
\tilde{d}_f := \gamma \odot d_f' = \gamma \odot \Pi_{\mathbbm{1}^\perp} d_f.
\]
This direction depends only on $d_f$ and the learned parameter $\gamma$, 
and is therefore context-invariant.

LayerNorm maps the context-invariant subspace $\mathrm{span}(d_f)$ to a 
new context-invariant subspace $\mathrm{span}(\tilde{d}_f)$ via the fixed 
transformation $d_f \mapsto \gamma \odot \Pi_{\mathbbm{1}^\perp} d_f$. 
The feature remains affinely decodable: a linear probe along $\tilde{d}_f$ 
with appropriate bias recovers a signal proportional to $\alpha_f(c)$, 
scaled by $1/\sigma(h(c))$.
\end{proof}

\subsection{Proof of Proposition~\ref{prop:dimensional-bound} (Dimensional Bound)}
\begin{proposition}[Dimensional Bound]
\label{prop:dimensional-bound}
If $f$ takes $K$ linearly separable values, then $\dim(\mathcal{S}_f) \leq K - 1$. For binary features, this yields directional invariance: $\mathcal{S}_f = \mathrm{span}(\{\mathbf{d}_f\})$.
\end{proposition}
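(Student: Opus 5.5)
To prove the bound, I would not reprove invariance but start from Theorem~\ref{thm:invariant-subspace}, which gives a context-invariant subspace containing the $f$-relevant component $\mathbf{h}_f(c)$ of every context expressing $f$. I would then replace that (possibly large) span by a \emph{minimal} invariant subspace: the smallest subspace that still determines the feature value through the linear interface. The point is that the statement concerns the minimal realization. The span constructed in the proof of Theorem~\ref{thm:invariant-subspace} could a priori be large, so I would take $\mathcal{S}_f$ to be the span of whatever is needed to distinguish the $K$ values.

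The key steps, in order, are as follows.

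\textbf{Step 1: reduce to class-level quantities.} By linear decodability, $\mathbf{w}_f^\top \mathbf{h}(c) = g(f(c))$. So the $f$-relevant information in any context is captured entirely by which value $y_k$ it takes. Let $\mathbf{h}_{y_k}$ be a representative of class $k$, say the centroid of the $f$-relevant components $\mathbf{h}_f(c)$ over contexts with $f(c)=y_k$. By linearity, this centroid still satisfies $\mathbf{w}_f^\top \mathbf{h}_{y_k} = g(y_k)$.

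\textbf{Step 2: count dimensions.} Center the representatives, $\tilde{\mathbf{h}}_{y_k} = \mathbf{h}_{y_k} - \bar{\mathbf{h}}$. Since $\sum_k \tilde{\mathbf{h}}_{y_k} = \mathbf{0}$, their span has dimension at most $K-1$. Linear separability of the $K$ values is a property of the affine configuration of the class representatives, and translation by the common offset $\bar{\mathbf{h}}$ is context-independent. It follows that $\mathcal{S}_f := \mathrm{span}\{\tilde{\mathbf{h}}_{y_k}\}$, with $\bar{\mathbf{h}}$ absorbed as a bias, is a context-invariant subspace carrying all $f$-relevant distinctions. Hence $\dim \mathcal{S}_f \le K-1$.

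\textbf{Step 3: the binary case.} For $K=2$ the bound gives $\dim \mathcal{S}_f \le 1$. Nontriviality comes from the fact that the feature is detectable: $g(y_1)\neq g(y_2)$ forces $\tilde{\mathbf{h}}_{y_1} \neq \mathbf{0}$. Hence $\dim \mathcal{S}_f = 1$, and I would set $\mathbf{d}_f = \tilde{\mathbf{h}}_{y_1}$, which by construction satisfies Definition~\ref{def:directional}.

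The main obstacle is Step 2. I need to justify that the within-class variation of $\mathbf{h}_f(c)$ around the centroids can be discarded without losing any ``$f$-relevant information,'' a notion that Definition~\ref{def:invariant-subspace} leaves informal. My first attempt would be to define $f$-relevant information operationally, as whatever affects $\mathbf{w}_f^\top \mathbf{h}(c)$. Within a class this quantity is constant, so any deviation from the centroid lies in $\mathbf{w}_f^\perp$ and can be moved into $\boldsymbol{\eta}(c)$, exactly as in Step 2 of the proof of Theorem~\ref{thm:invariant-subspace}. That operational reading even suggests a stronger bound, $\dim \mathcal{S}_f \le 1$ for any $K$, via $\mathbf{w}_f$. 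Since the stated $K-1$ is weaker, it follows in either case, and I would present the centroid argument as the main proof, noting this as a remark.
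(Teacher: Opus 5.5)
Your proposal is correct and follows essentially the same route as the paper: take class centroids, center them so that $\sum_k \tilde{\mathbf{h}}_{y_k} = \mathbf{0}$ caps the dimension at $K-1$, pass to the minimal invariant subspace, and use detectability to force $\dim \mathcal{S}_f = 1$ when $K=2$. You also absorb $\bar{\mathbf{h}}$ as a bias and move within-class deviations into $\mathbf{w}_f^\perp$, which makes precise two points the paper glosses over; in particular, the paper passes from the affine-hull dimension of the centroids to $\dim \mathcal{S}_f$ without comment.
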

\subsection{Proof of Theorem~\ref{thm:self-reference} (Self-Reference Property)}
%-----------------------------------------------------------------------------

\begin{theorem*}[Self-Reference Property]
Let $t$ be a token with associated semantic feature $f_t$. If $f_t$ is communicated through linear interfaces, then $t$'s representation provides the invariant direction for $f_t$:
\[
\mathbf{h}_t \propto \mathbf{d}_{f_t}
\]
where $\mathbf{h}_t$ is obtained by passing token $t$ through the model.
\end{theorem*}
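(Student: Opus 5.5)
The argument chains Theorem~\ref{thm:invariant-subspace} with Proposition~\ref{prop:dimensional-bound}, then uses one extra modelling step: the bare token $t$ is a context that expresses $f_t$ and essentially nothing else. First, view the single-token input $c_t = (t; 1)$ as a context in the sense of Definition~\ref{def:context}, and set $\mathbf{h}_t := \mathbf{h}(c_t)$. Since $f_t$ is communicable, it is multi-context (for example, ``France'' and ``the country of the Eiffel Tower''), and $c_t$ is one of the contexts expressing it. Theorem~\ref{thm:invariant-subspace} then gives an invariant subspace $\mathcal{S}_{f_t}$ and the decomposition $\mathbf{h}_t = \mathbf{h}_{f_t}(c_t) + \boldsymbol{\eta}(c_t)$, with $\mathbf{h}_{f_t}(c_t) \in \mathcal{S}_{f_t}$ and $\boldsymbol{\eta}(c_t) \in \mathbf{w}_{f_t}^\perp$.

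Second, reduce $\mathcal{S}_{f_t}$ to a line. Treat $f_t$ as the binary feature ``expresses $t$'s concept versus not''; this is natural because the token's logit readout $\mathbf{w}_t^\top \mathbf{h}$ implements exactly such an implicit classification, as in the Remark following Proposition~\ref{prop:capacity}. Proposition~\ref{prop:dimensional-bound} with $K=2$ then gives $\mathcal{S}_{f_t} = \mathrm{span}(\{\mathbf{d}_{f_t}\})$. It follows that $\mathbf{h}_{f_t}(c_t) = \lambda_t \mathbf{d}_{f_t}$ for some scalar $\lambda_t$, and Corollary~\ref{cor:decomposition} applied at $c_t$ reads $\mathbf{h}_t = \alpha_{f_t}(c_t)\mathbf{d}_{f_t} + \boldsymbol{\eta}(c_t)$.

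Third, the main obstacle is showing that $\boldsymbol{\eta}(c_t)$ is negligible, so that proportionality holds rather than mere correlation. Nothing in Theorem~\ref{thm:invariant-subspace} forces this: the residual $\boldsymbol{\eta}$ is arbitrary in $\mathbf{w}_{f_t}^\perp$. I would isolate an explicit \emph{canonicity} hypothesis and argue for it from Proposition~\ref{prop:capacity}. In the bare-token context, the only features active are those in $F_t$ (sparse activation). By the factorization, these are dominated by $f_t$ itself, so $\boldsymbol{\eta}(c_t)$ consists only of components shared with other tokens of the class. These are exactly the components removed by mean-centering in the zero-shot probe. With this hypothesis, $\|\boldsymbol{\eta}(c_t)\| \ll |\alpha_{f_t}(c_t)|\,\|\mathbf{d}_{f_t}\|$, and the claim holds with $\lambda_t = \alpha_{f_t}(c_t)$.

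Finally, I would check that $\lambda_t \neq 0$: if it were zero, $f_t$ would not be decodable at $c_t$, contradicting linear decodability, since $g(f_t(c_t))$ must differ from its value on contexts not expressing $f_t$. I would state the result honestly as an approximate proportionality, $\mathbf{h}_t \approx \lambda_t \mathbf{d}_{f_t}$, which is exact only under the canonicity assumption.
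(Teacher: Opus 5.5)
This is essentially the paper's own argument: the invariant subspace from Theorem~\ref{thm:invariant-subspace}, the reduction to a line via the binary case of Proposition~\ref{prop:dimensional-bound}, then the bare token as a canonical expression of $f_t$ plus a dominance assumption on the residual $\boldsymbol{\eta}_t$, concluding, as the paper also does, only approximate proportionality. The one piece to drop is your attempt to derive canonicity from Proposition~\ref{prop:capacity}, since that result factorizes the unembedding rows $\mathbf{w}_t$ rather than the hidden state $\mathbf{h}_t$ (and never says $f_t$ dominates $F_t$), and appealing to mean-centering would at best give $\mathbf{h}_t - \bar{\mathbf{h}} \approx \lambda_t \mathbf{d}_{f_t}$ rather than the raw claim; canonicity should stay a bare hypothesis, exactly as the paper leaves it.
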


\begin{proof}
We establish the result in three steps.

\textbf{Step 1: Existence of invariant subspace.}
By Theorem~\ref{thm:invariant-subspace}, any communicable feature $f_t$ is associated with an invariant subspace $\mathcal{S}_{f_t} \subseteq \mathbb{R}^d$ that contains all $f_t$-relevant information. This subspace is determined by $f_t$ and the model parameters, independent of any particular context.

\textbf{Step 2: Directional invariance for tokens.}
Under the assumption of directional invariance (which holds for binary features by Proposition~\ref{prop:dimensional-bound}, and empirically for many semantic features), this subspace is one-dimensional:
\[
\mathcal{S}_{f_t} = \text{span}(\{\mathbf{d}_{f_t}\})
\]
where $\mathbf{d}_{f_t} \in \mathbb{R}^d$ is the unique direction (up to sign) representing $f_t$.

\textbf{Step 3: Token as canonical expression.}
The token $t$ is a canonical expression of feature $f_t$---by definition, presenting $t$ to the model expresses $f_t$ maximally and unambiguously. Therefore, when $t$ is processed, the resulting representation $\mathbf{h}_t$ must encode $f_t$.

By the invariant subspace property, all $f_t$-relevant information lies in $\mathcal{S}_{f_t}$. Since $t$ canonically expresses $f_t$, the $f_t$-component of $\mathbf{h}_t$ must be non-zero and lie entirely within $\mathcal{S}_{f_t} = \text{span}(\{\mathbf{d}_{f_t}\})$. Thus:
\[
\mathbf{h}_t = \lambda_t \mathbf{d}_{f_t} + \boldsymbol{\eta}_t
\]
where $\lambda_t \neq 0$ is a scalar magnitude and $\boldsymbol{\eta}_t \perp \mathbf{d}_{f_t}$ captures features orthogonal to $f_t$.

For tokens that primarily express a single dominant feature (e.g., ``France'' primarily expresses the France feature), $\boldsymbol{\eta}_t$ is small relative to $\lambda_t \mathbf{d}_{f_t}$, yielding $\mathbf{h}_t \propto \mathbf{d}_{f_t}$ to good approximation.

\textbf{Consistency across contexts.}
For any context $c$ expressing $f_t$, the same invariant direction applies:
\[
\mathbf{h}(c) = \lambda_c \mathbf{d}_{f_t} + \boldsymbol{\eta}(c)
\]
where $\lambda_c$ varies with context but $\mathbf{d}_{f_t}$ remains fixed. This confirms that the token direction $\mathbf{h}_t$ and context directions $\mathbf{h}(c)$ share the same orientation, differing only in magnitude---exactly as predicted by directional invariance.
\end{proof}

%-----------------------------------------------------------------------------
\subsection{Additional Theoretical Implications}
%-----------------------------------------------------------------------------

Our framework yields several additional consequences for transformer interpretability:

\begin{corollary}[Attention Preserves Feature Identity]
\label{cor:attention}
The OV circuit linearly transforms features while preserving their identity. If position $j$ encodes feature $f$ as $\mathbf{h}_j = \alpha_f \mathbf{d}_f + \boldsymbol{\eta}_j$, the attention output at position $i$ is:
\begin{equation}
    \mathbf{o}_i^{(f)} = \underbrace{\left(\sum_j a_{ij} \cdot \alpha_f^{(j)}\right)}_{\text{context-dependent magnitude}} \cdot \underbrace{W_O W_V \mathbf{d}_f}_{\text{linearly transformed direction}}
\end{equation}
The QK circuit modulates \emph{how much} of the feature transfers (magnitude); the OV circuit determines \emph{how} the feature direction is transformed.
\end{corollary}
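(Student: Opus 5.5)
The plan is to trace a feature-bearing vector through the OV circuit, using only the linearity of $W_O W_V$ (Assumption~\ref{ass:architecture}, item 2) and the decomposition of Corollary~\ref{cor:decomposition}.

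\textbf{Setup.} Write the attention output at position $i$ in the standard form
\begin{equation}
\mathbf{o}_i = \sum_j a_{ij}\, W_O W_V \mathbf{h}_j .
\end{equation}
Here the weights $a_{ij}$ are the softmax of the QK scores. They are context-dependent scalars that do not depend on the value vectors once the queries and keys are fixed. Treating them as given constants is the key move, because it isolates all nonlinearity (softmax, QK routing) into scalar coefficients.

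\textbf{Substitution and regrouping.} Insert $\mathbf{h}_j = \alpha_f^{(j)} \mathbf{d}_f + \boldsymbol{\eta}_j$ for each source position, using the decomposition from Corollary~\ref{cor:decomposition} with the same invariant direction $\mathbf{d}_f$ at every $j$. This common direction is guaranteed by Theorem~\ref{thm:invariant-subspace} together with directional invariance. For positions not expressing $f$, set $\alpha_f^{(j)} = 0$. Linearity of $W_O W_V$ and of the finite sum then gives
\begin{equation}
\mathbf{o}_i = \Big(\sum_j a_{ij}\alpha_f^{(j)}\Big) W_O W_V \mathbf{d}_f + \sum_j a_{ij} W_O W_V \boldsymbol{\eta}_j .
\end{equation}
I define $\mathbf{o}_i^{(f)}$ to be the first term, which is exactly the claimed formula. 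The direction $W_O W_V \mathbf{d}_f$ depends only on parameters and $\mathbf{d}_f$, so it is context-invariant. All context dependence sits in the scalar $\sum_j a_{ij}\alpha_f^{(j)}$. This justifies reading the QK circuit as controlling magnitude and the OV circuit as controlling direction.

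\textbf{Main obstacle.} The difficult step is showing that the remainder $\sum_j a_{ij} W_O W_V \boldsymbol{\eta}_j$ carries no $f$-component. In general it does not, since $W_O W_V$ need not preserve orthogonality, so $W_O W_V \boldsymbol{\eta}_j$ may have a component along $W_O W_V \mathbf{d}_f$. I would handle this in one of two ways:
\begin{itemize}
\item Define the $f$-component of the output as the image of the $f$-component of the input. The statement then holds by construction, which is the reading the corollary's wording suggests.
\item Alternatively, add an approximate-orthogonality hypothesis on the image, mirroring the approximation used in Proposition~\ref{prop:operations}.
\end{itemize}

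\textbf{Degenerate case.} I would also note that if $W_O W_V \mathbf{d}_f = 0$, the feature is simply not transmitted by this head. Identity is preserved only when $\mathbf{d}_f \notin \ker(W_O W_V)$, which is analogous to the $\mathbbm{1}$-condition in Proposition~\ref{prop:layernorm-preservation}.
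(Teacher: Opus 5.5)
Your proposal is correct and follows the paper's proof essentially verbatim: expand $\mathbf{o}_i=\sum_j a_{ij}W_OW_V\mathbf{h}_j$, substitute the decomposition, regroup by linearity, and call the first term the $f$-relevant component. The paper takes your first option for the remainder without comment, so your extra caveats are valid points it does not state: $W_OW_V$ need not preserve orthogonality, so the remainder may have a component along $W_OW_V\mathbf{d}_f$, and the feature is not transmitted at all if $\mathbf{d}_f\in\ker(W_OW_V)$.
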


\begin{proof}
By linearity of the OV circuit:
\begin{align}
\mathbf{o}_i &= \sum_j a_{ij} \cdot W_O W_V \mathbf{h}_j \\
&= \sum_j a_{ij} \cdot W_O W_V (\alpha_f^{(j)} \mathbf{d}_f + \boldsymbol{\eta}_j) \\
&= \left(\sum_j a_{ij} \cdot \alpha_f^{(j)}\right) W_O W_V \mathbf{d}_f + \sum_j a_{ij} \cdot W_O W_V \boldsymbol{\eta}_j
\end{align}
The $f$-relevant component is the first term, with magnitude determined by attention-weighted sum and direction determined by linear transformation of $\mathbf{d}_f$.
\end{proof}

\begin{corollary}[Distributional Influence]
\label{cor:steering}
Adding a feature direction $\mathbf{d}_f$ to the hidden state shifts the output distribution predictably:
\begin{equation}
    \Delta\text{logit}_t = \lambda \cdot \mathbf{w}_t^\top \mathbf{d}_f
\end{equation}
Tokens aligned with $\mathbf{d}_f$ are boosted; anti-aligned tokens are suppressed. This provides the theoretical foundation for activation steering~\citep{turner2023steering, zou2023representation}.
\end{corollary}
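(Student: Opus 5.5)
Corollary~\ref{cor:steering} is a direct consequence of Assumption~\ref{ass:architecture}(4): the logits are a linear function of the final residual stream. I will interpret ``adding a feature direction'' as an intervention $\mathbf{h}(c) \mapsto \mathbf{h}(c) + \lambda \mathbf{d}_f$ applied at the final layer, i.e.\ directly to the vector that is read out by $W_U$. Writing $\mathbf{w}_t^\top$ for the row of $W_U$ corresponding to token $t$, the plan is to compute
\[
\text{logit}_t' - \text{logit}_t = \mathbf{w}_t^\top(\mathbf{h}(c) + \lambda \mathbf{d}_f) - \mathbf{w}_t^\top \mathbf{h}(c) = \lambda\, \mathbf{w}_t^\top \mathbf{d}_f,
\]
which is exactly the claimed identity. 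The shift is independent of the context $c$. This is the ``predictable'' part of the statement.

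The qualitative claims follow from the sign of $\mathbf{w}_t^\top \mathbf{d}_f$ when $\lambda > 0$. Tokens with $\mathbf{w}_t^\top \mathbf{d}_f > 0$ have their logits raised, and tokens with $\mathbf{w}_t^\top \mathbf{d}_f < 0$ have them lowered. To translate logit shifts into statements about probabilities, I will use the softmax. The new log-probability ratio between two tokens $t, t'$ changes by $\lambda(\mathbf{w}_t - \mathbf{w}_{t'})^\top \mathbf{d}_f$. So relative probabilities move monotonically in the direction of alignment.

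To connect with the factorized form in Proposition~\ref{prop:capacity}, I will substitute $\mathbf{w}_t = \sum_{g\in F_t}\alpha_{t,g}\mathbf{d}_g$. This gives $\Delta\text{logit}_t = \lambda\sum_{g}\alpha_{t,g}\,\mathbf{d}_g^\top\mathbf{d}_f$. Under approximate orthogonality of feature directions, as used in Proposition~\ref{prop:operations}, this reduces to $\lambda\alpha_{t,f}\|\mathbf{d}_f\|^2$. Hence exactly the tokens carrying feature $f$ are boosted.

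The main obstacle is steering at an intermediate layer $\ell < L$, where later layers are nonlinear (MLPs, LayerNorm, attention patterns). There the exact identity no longer holds. I would handle this in two steps.

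\begin{enumerate}
\item By Assumption~\ref{ass:architecture}(1), the added vector persists additively in the residual stream, so the final state gains $\lambda\mathbf{d}_f$ plus the change in downstream layer outputs.
\item I would argue via Proposition~\ref{prop:layernorm-preservation} and Corollary~\ref{cor:attention} that, to first order in $\lambda$, these changes preserve the direction up to fixed linear maps.
\end{enumerate}

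The result would then be the stated formula with $\mathbf{d}_f$ replaced by a transformed direction, plus an $O(\lambda^2)$ remainder. I would state the corollary exactly for the final layer and only approximately otherwise.
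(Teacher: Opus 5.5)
Your proposal is correct and its core is exactly the paper's proof: the paper also sets $\mathbf{h}' = \mathbf{h} + \lambda \mathbf{d}_f$ at the readout and uses linearity of $W_U$ to obtain $\Delta\text{logit}_t = \lambda \mathbf{w}_t^\top \mathbf{d}_f$, with the sign of $\mathbf{w}_t^\top \mathbf{d}_f$ deciding boosting versus suppression. Your softmax log-ratio remark, the factorized substitution, and the intermediate-layer sketch go beyond what the paper proves and are not needed for the statement; the last is only heuristic, because steering upstream also changes later attention weights (which Corollary~\ref{cor:attention} holds fixed) and the downstream Jacobian generally depends on the context, so the transformed direction need not be context-invariant.
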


\begin{proof}
Let $\mathbf{h}$ be the original hidden state and $\mathbf{h}' = \mathbf{h} + \lambda \mathbf{d}_f$ the steered state. By linearity of unembedding:
\[
\text{logit}_t(\mathbf{h}') = \mathbf{w}_t^\top \mathbf{h}' = \mathbf{w}_t^\top \mathbf{h} + \lambda \mathbf{w}_t^\top \mathbf{d}_f = \text{logit}_t(\mathbf{h}) + \lambda \mathbf{w}_t^\top \mathbf{d}_f
\]
Thus $\Delta\text{logit}_t = \lambda \mathbf{w}_t^\top \mathbf{d}_f$, which is positive when $\mathbf{w}_t$ and $\mathbf{d}_f$ are aligned.
\end{proof}

\begin{corollary}[Non-Bidirectionality]
\label{cor:reversal}
The invariant subspace for ``$A$ predicts $B$'' is generally distinct from that for ``$B$ predicts $A$'':
\begin{equation}
    \mathcal{S}_{A \to B} \not\subseteq \mathcal{S}_{B \to A} \quad \text{and} \quad \mathcal{S}_{B \to A} \not\subseteq \mathcal{S}_{A \to B}
\end{equation}
This provides a geometric explanation for the reversal curse~\citep{berglund2023reversal}.
\end{corollary}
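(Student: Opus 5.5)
We would prove the corollary by making the subspaces of Theorem~\ref{thm:invariant-subspace} explicit for the two directed relations and then reducing non-containment to a linear-independence condition on two unembedding rows. First we formalize the features. Let $f_{A\to B}$ be the feature ``the context contains $A$ and must predict $B$'', decoded through the unembedding. We take $W = W_U$ and $\boldsymbol{\phi} = \mathbf{e}_B$, so that $\mathbf{w}_{f_{A\to B}} = W_U^\top \mathbf{e}_B = \mathbf{w}_B$, the unembedding row of $B$. Symmetrically, $\mathbf{w}_{f_{B\to A}} = \mathbf{w}_A$. Both features are communicable in the sense of Definition~\ref{def:communicable}: the multi-context condition holds across paraphrases, and linear decodability holds via the logit.

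The step we expect to be the main obstacle is that the construction of $\mathcal{S}_f$ in the proof of Theorem~\ref{thm:invariant-subspace} is not canonical. Step~2 there only constrains $\mathbf{w}_f^\top \mathbf{h}_f(c)$, so many choices of $\mathbf{h}_f(c)$ are admissible, and with an arbitrary choice the corollary could be false or vacuous. We would resolve this by fixing the minimal-norm $f$-relevant component,
\[
\mathbf{h}_f(c) = \frac{o_f(c)}{\|\mathbf{w}_f\|^2}\,\mathbf{w}_f ,
\]
which is the unique choice with $\mathbf{h}(c)-\mathbf{h}_f(c) \in \mathbf{w}_f^\perp$. This choice is exactly the decomposition of Step~2, and it agrees with the orthogonality convention of Corollary~\ref{cor:decomposition}. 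Provided some context expressing $f$ has $o_f(c)\neq 0$, which is needed for the feature to be detectable, we obtain
\[
\mathcal{S}_{A\to B} = \mathrm{span}(\{\mathbf{w}_B\}), \qquad \mathcal{S}_{B\to A} = \mathrm{span}(\{\mathbf{w}_A\}).
\]

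With this in hand, both non-containments reduce to the single condition that $\mathbf{w}_A$ and $\mathbf{w}_B$ are linearly independent, because two one-dimensional subspaces either coincide or neither contains the other. We would then justify ``generally'' in a genericity sense. The set of unembedding matrices whose rows $A$ and $B$ are parallel is the zero set of all $2\times 2$ minors of the $2\times d$ matrix $\begin{pmatrix}\mathbf{w}_A^\top\\ \mathbf{w}_B^\top\end{pmatrix}$. For $d\ge 2$ this is a proper algebraic subset, and hence has Lebesgue measure zero. The conclusion therefore holds for almost every $W_U$, and in particular for any $W_U$ in which distinct tokens $A\neq B$ yield distinct logit profiles; if the rows were parallel, the two logits would be proportional on every context. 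We would add a remark that this argument is also consistent with Proposition~\ref{prop:capacity}, since distinct tokens carry distinct factor sets $F_A\neq F_B$.

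Finally, to connect the result to the reversal curse, we would apply the steering computation of Corollary~\ref{cor:steering}. Training on ``$A$ predicts $B$'' adjusts hidden states along $\mathbf{w}_B$, and this shifts $\text{logit}_A$ only by $\lambda\,\mathbf{w}_A^\top \mathbf{w}_B$. That shift is not controlled by the $A\to B$ training signal, and it vanishes when the two rows are near-orthogonal. This step is interpretive rather than part of the formal non-containment claim, so we would present it as a consequence rather than as part of the proof.
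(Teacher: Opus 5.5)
Your argument is correct within the paper's framework, but it is considerably more explicit than the paper's own proof. The paper argues only that $A\to B$ and $B\to A$ are distinct communicable features (different outputs, different training data). It then invokes Theorem~\ref{thm:invariant-subspace} to give each feature its own subspace and concludes that the two subspaces ``need not coincide.''

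That route is general: it applies to any interface $W$ and any pair of distinct features. However, ``need not coincide'' is only a possibility statement and does not by itself deliver the two displayed non-inclusions.

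You instead pin the subspaces down.
\begin{itemize}
\item Taking the orthogonal (minimal-norm) $f$-relevant component repairs the non-uniqueness in Step~2 of the theorem's proof, where the paper's claimed uniqueness in fact fails: $\eta(c)=0$, $\mathbf{h}_f(c)=\mathbf{h}(c)$ is also admissible.
\item This yields $\mathcal{S}_{A\to B}=\mathrm{span}(\{\mathbf{w}_B\})$ and $\mathcal{S}_{B\to A}=\mathrm{span}(\{\mathbf{w}_A\})$.
\item ``Generally'' then becomes a precise genericity claim. Both non-inclusions hold iff the two unembedding rows are non-parallel, and this fails only on a measure-zero set of $W_U$ when $d\ge 2$.
\end{itemize}
The price is that your computation exposes that $\mathcal{S}_{A\to B}$ depends only on $B$. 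The geometric statement therefore says nothing specific about the pairing with $A$, and you are right to present the reversal-curse connection as interpretive rather than proved.

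Two small slips to fix:
\begin{itemize}
\item Every admissible $\mathbf{h}_f(c)$ satisfies $\mathbf{h}(c)-\mathbf{h}_f(c)\in\mathbf{w}_f^\perp$. Uniqueness therefore comes from requiring $\mathbf{h}_f(c)\in\mathrm{span}(\{\mathbf{w}_f\})$ (equivalently, minimal norm), not from that condition.
\item ``Distinct logit profiles'' should read ``non-proportional logit profiles.'' Parallel rows with ratio $\neq 1$ give logits that are distinct but proportional.
\end{itemize}
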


\begin{proof}
The features ``$A$ predicts $B$'' and ``$B$ predicts $A$'' are distinct communicable features---they produce different outputs and are learned from different training examples. By Theorem~\ref{thm:invariant-subspace}, each occupies its own invariant subspace determined by that feature. Since these are different features, their subspaces need not coincide. Models trained on ``$A$ is $B$'' learn $\mathcal{S}_{A \to B}$ but have no reason to simultaneously learn $\mathcal{S}_{B \to A}$, explaining the failure to infer reverse relations.
\end{proof}

\section{Additional Testing on Unsupervised Probes}

\subsection{Unsupervised Probes}

We validate these properties through comprehensive visualization analysis, comparing embeddings before and after probe transformation using both PCA and t-SNE projections. These visualizations confirm that unsupervised probes learn transformations that improve class separability—instances cluster more tightly around their corresponding prototypes in the transformed space—while the prototype vectors themselves maintain interpretable directional structure aligned with semantic features.

\begin{figure}[H]
    \centering
    \includegraphics[width=1\linewidth]{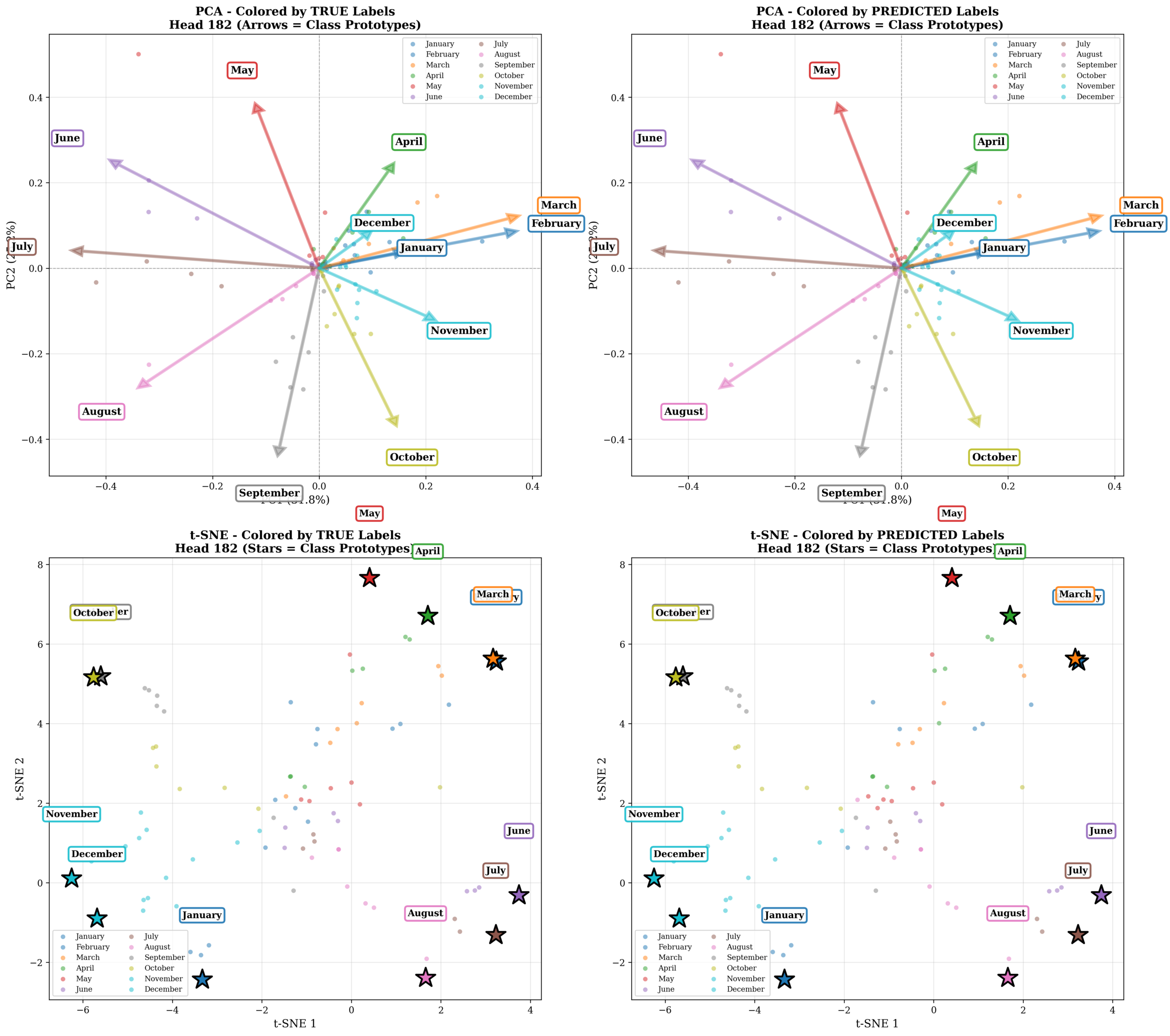}
    \caption{Zero-shot generalization of unsupervised probes. Probes trained only on odd months (Jan, Mar, May, Jul, Sep, Nov) successfully classify held-out even months, demonstrating that the learned transformation captures generalizable temporal structure rather than memorizing training classes.}
    \label{fig:months}
\end{figure}

Zero-shot generalization to unseen classes. PCA (top) and t-SNE (bottom) projections of month embeddings after unsupervised probe transformation. Training used only odd months; even months were held out. In both visualizations, held-out classes cluster tightly around their respective prototypes, demonstrating that the learned geometric structure generalizes without exposure to these classes during training.

\begin{figure*}[H]
\centering
\includegraphics[width=0.45\textwidth]{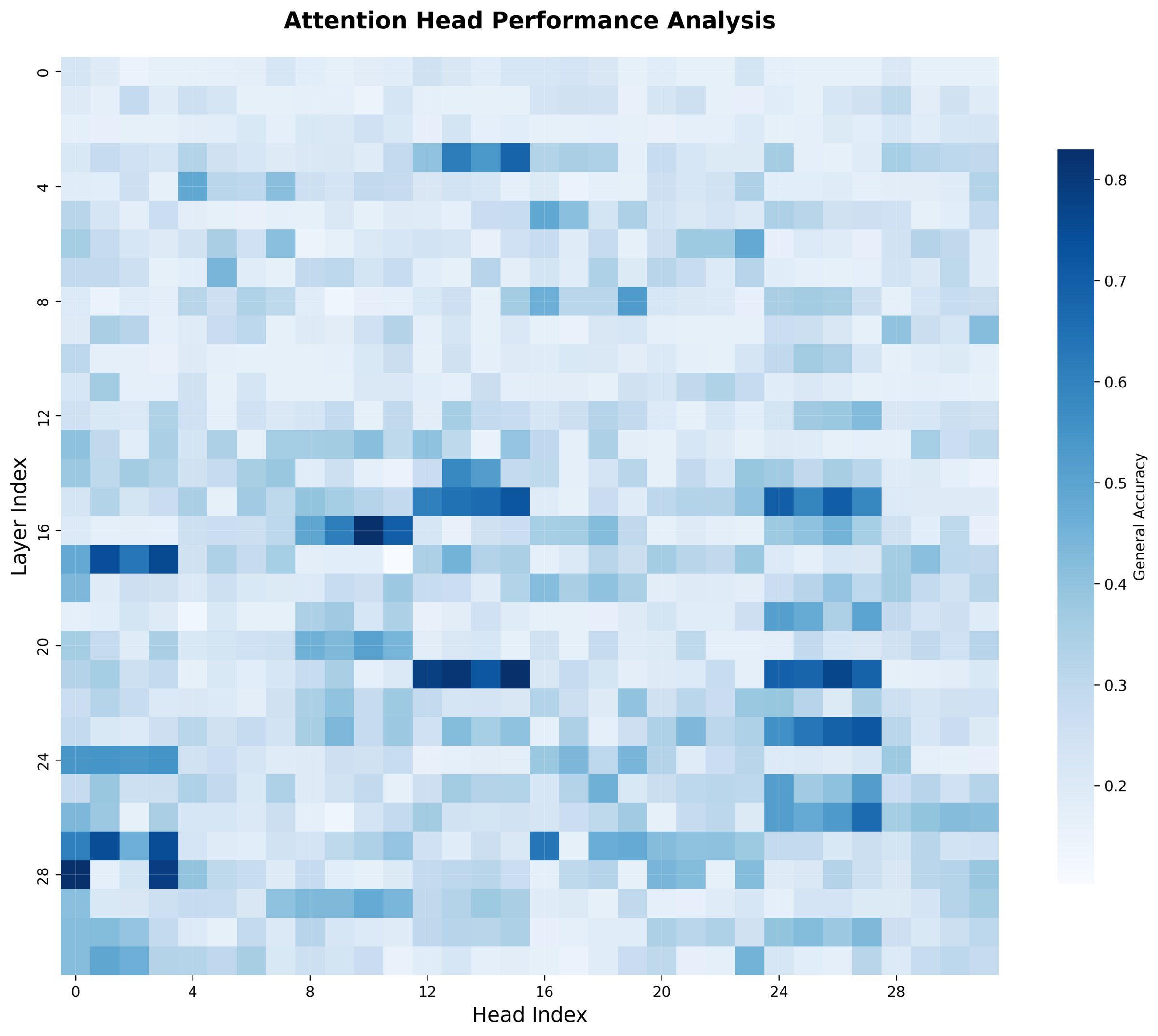}
\hfill
\includegraphics[width=0.45\textwidth]{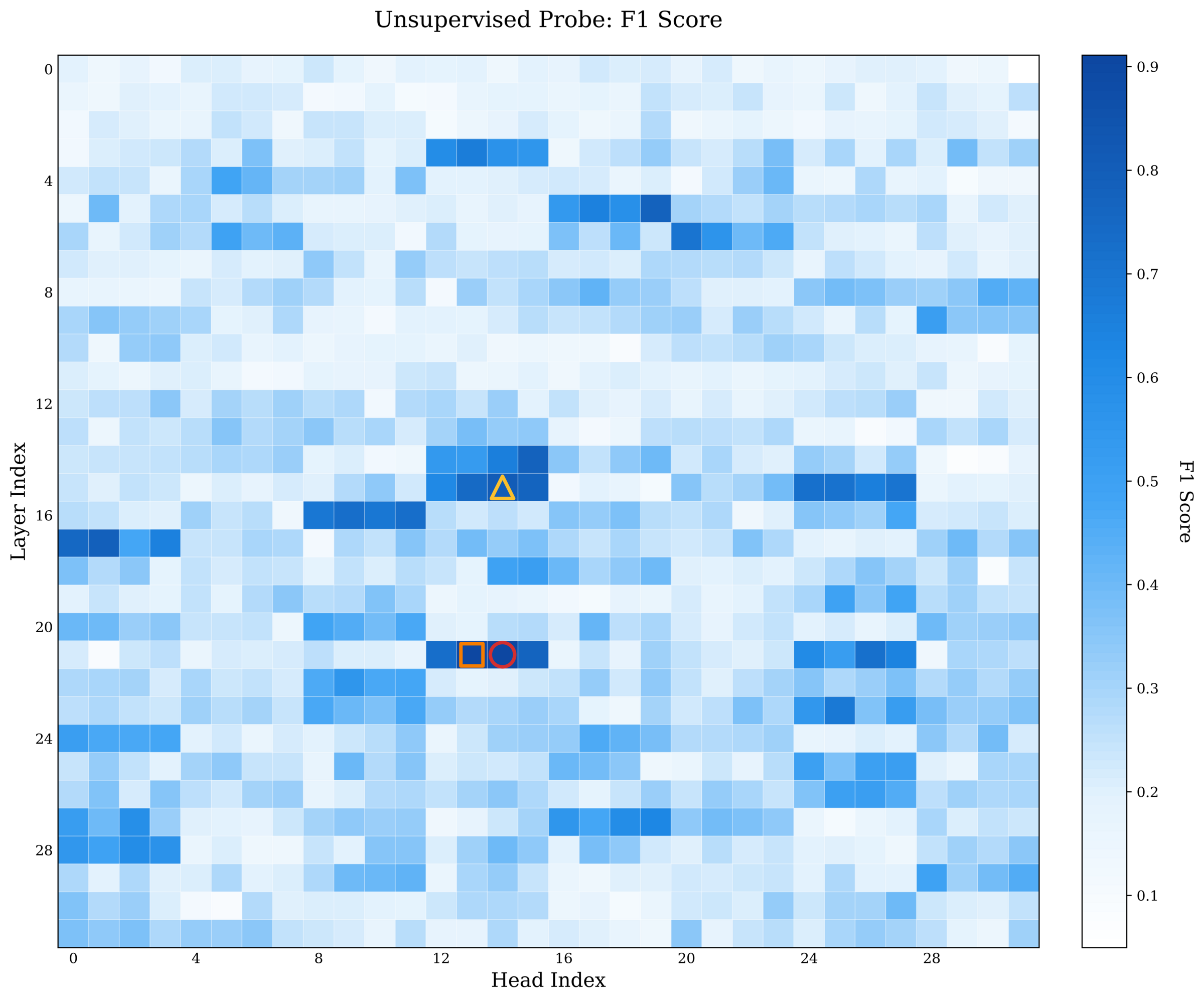}
\caption{Zero-shot (left) and Unsupervised prob classification performance on the Animal dataset in Llama-3-8B. Both contain similar accuracy distributions.}
\label{fig:heatmap_comp}
\end{figure*}

\section{Supplementary Visualizations}
\label{app:sae}
This section provides additional SAE-based analyses supporting the invariant feature sharing and head-level sparsity patterns discussed in the main paper.

Across different datasets and models, we report (i) shared top-$k$ SAE latent dimensions between instance-mean and class-prototype representations within individual heads, and (ii) unsupervised head-level classification performance using SAE latents. Figures are grouped by dataset and model to demonstrate the consistency of these patterns.

% SAE -> Llama8b -> Fruits
\begin{figure}[H]
\centering
\begin{tabular}{cc}
    \includegraphics[width=0.52\linewidth]{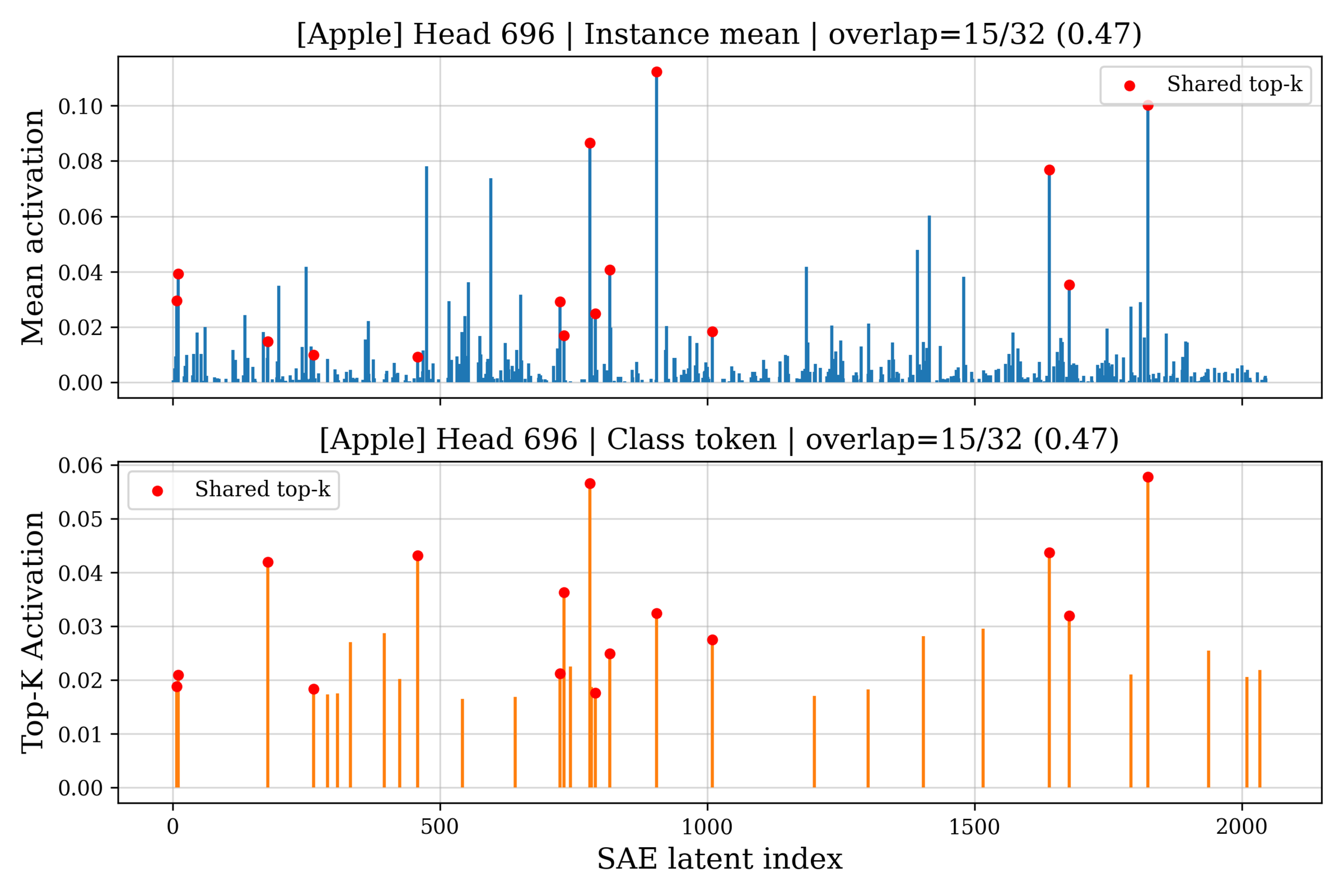} &
    \includegraphics[width=0.52\linewidth]{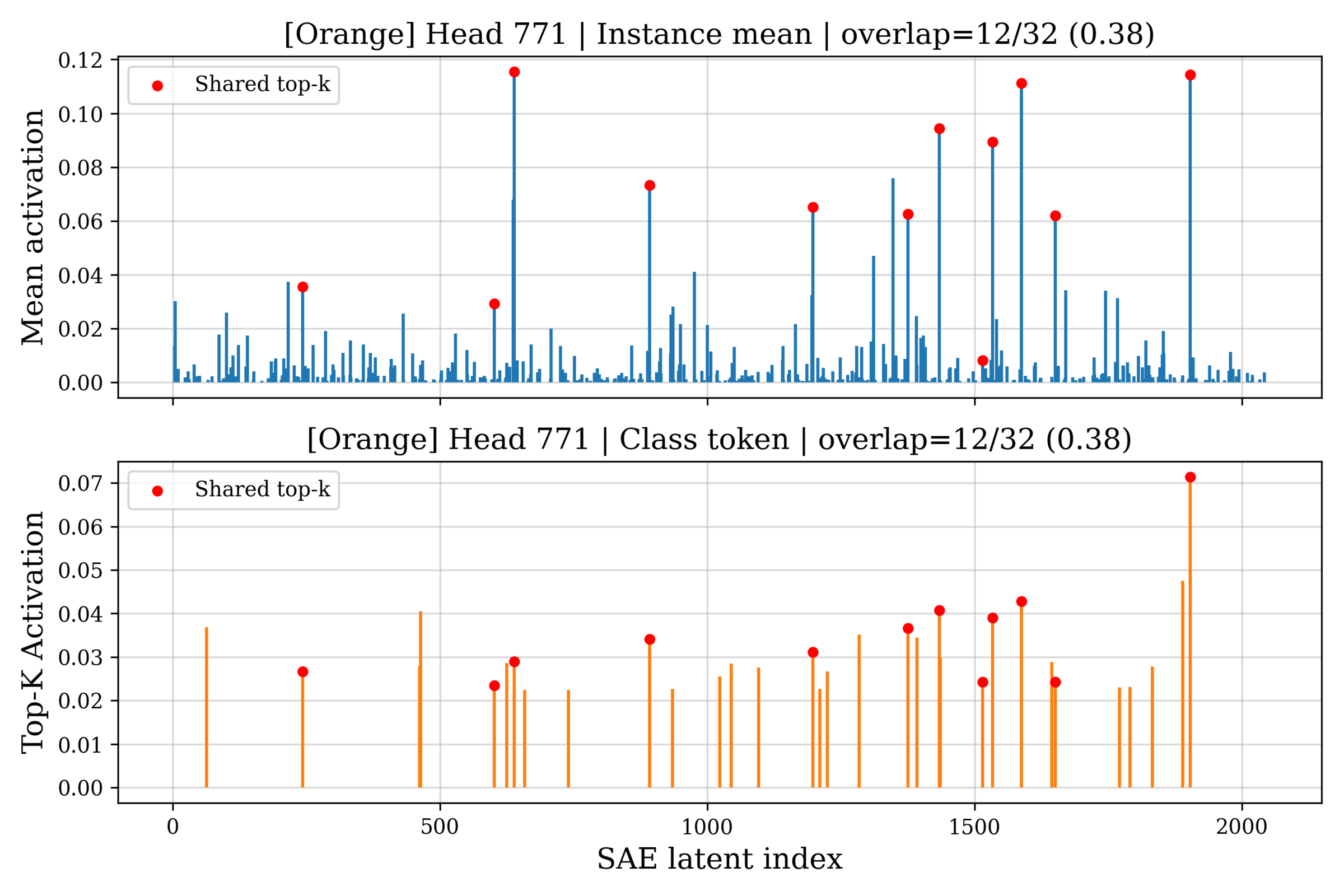}
\end{tabular}
\caption{Shared top-$k$ SAE latent activations between instance-mean and class-prototype representations for the Apple and Orange classes in Llama-3-8B, highlighting class-consistent sparse features within individual attention heads.}
\label{fig:sae_llama8b_fruits_overlap}
\end{figure}

\begin{figure}[H]
\centering
\includegraphics[width=0.7\columnwidth]{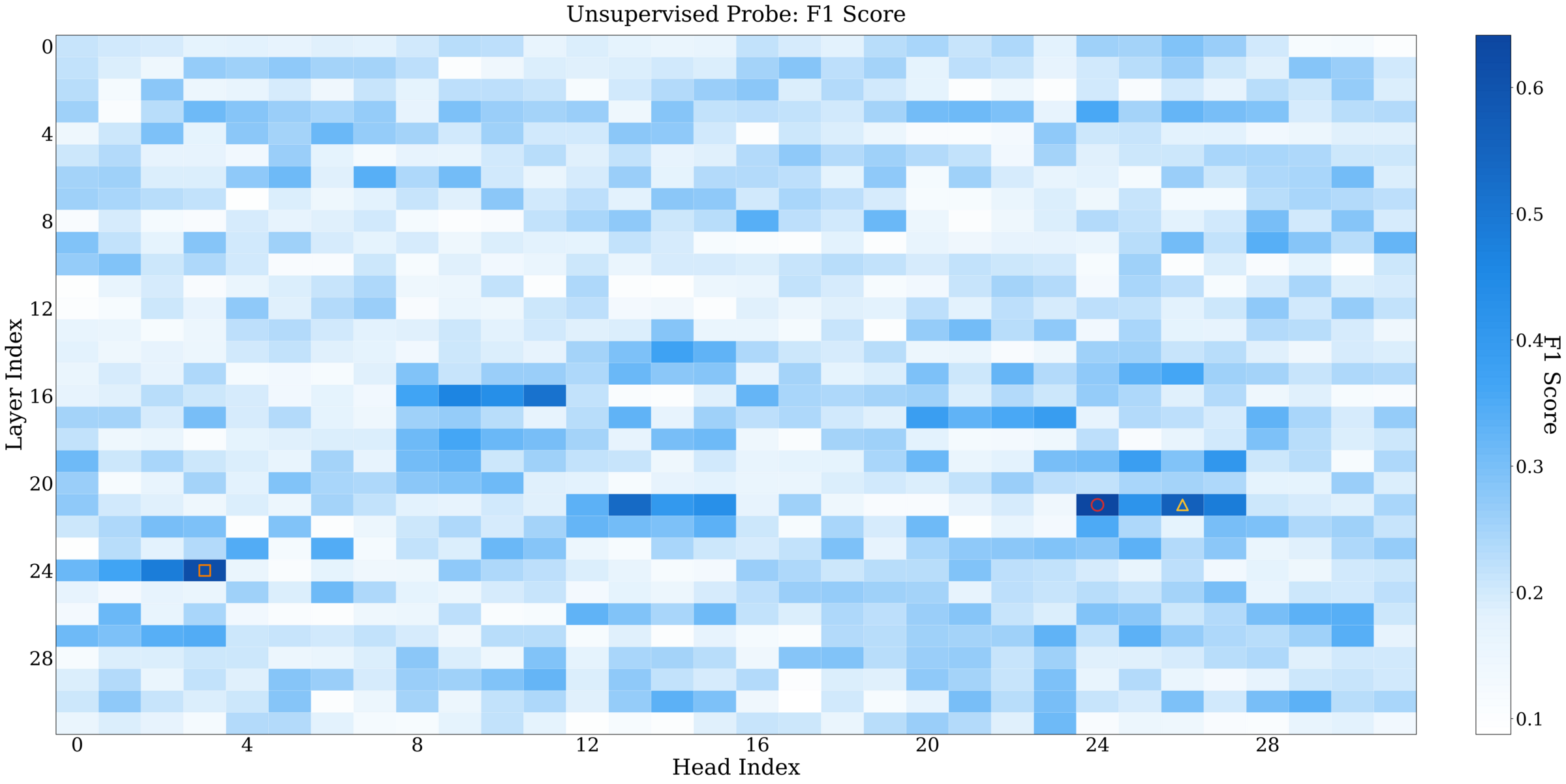}
\caption{Unsupervised head-level classification performance on the Fruits task using SAE latent representations extracted from Llama-3-8B.}
\label{fig:sae_llama8b_fruits_head}
\end{figure}

% SAE -> mistral -> Companies
\begin{figure}[H]
\centering
\begin{tabular}{cc}
    \includegraphics[width=0.52\linewidth]{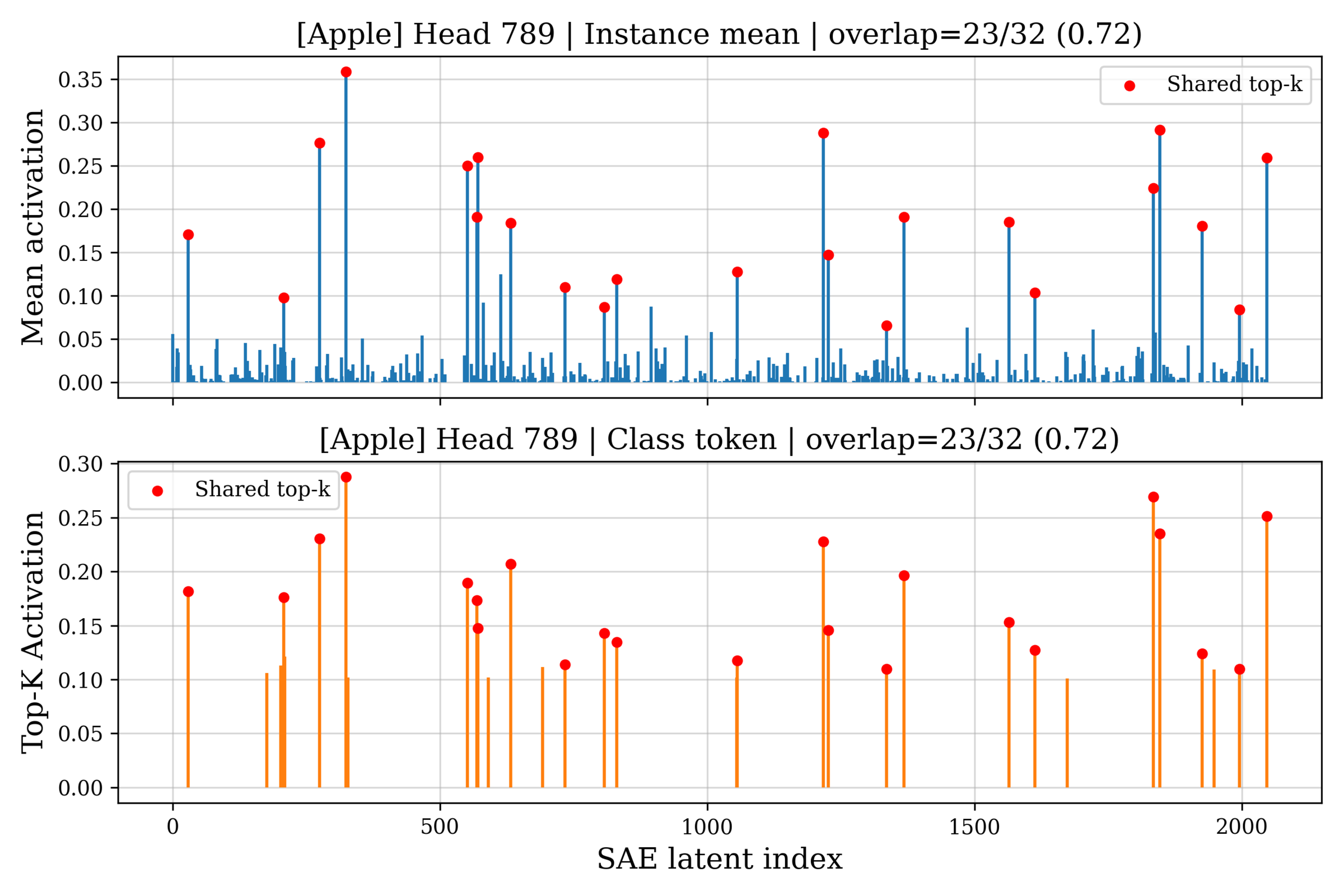} &
    \includegraphics[width=0.52\linewidth]{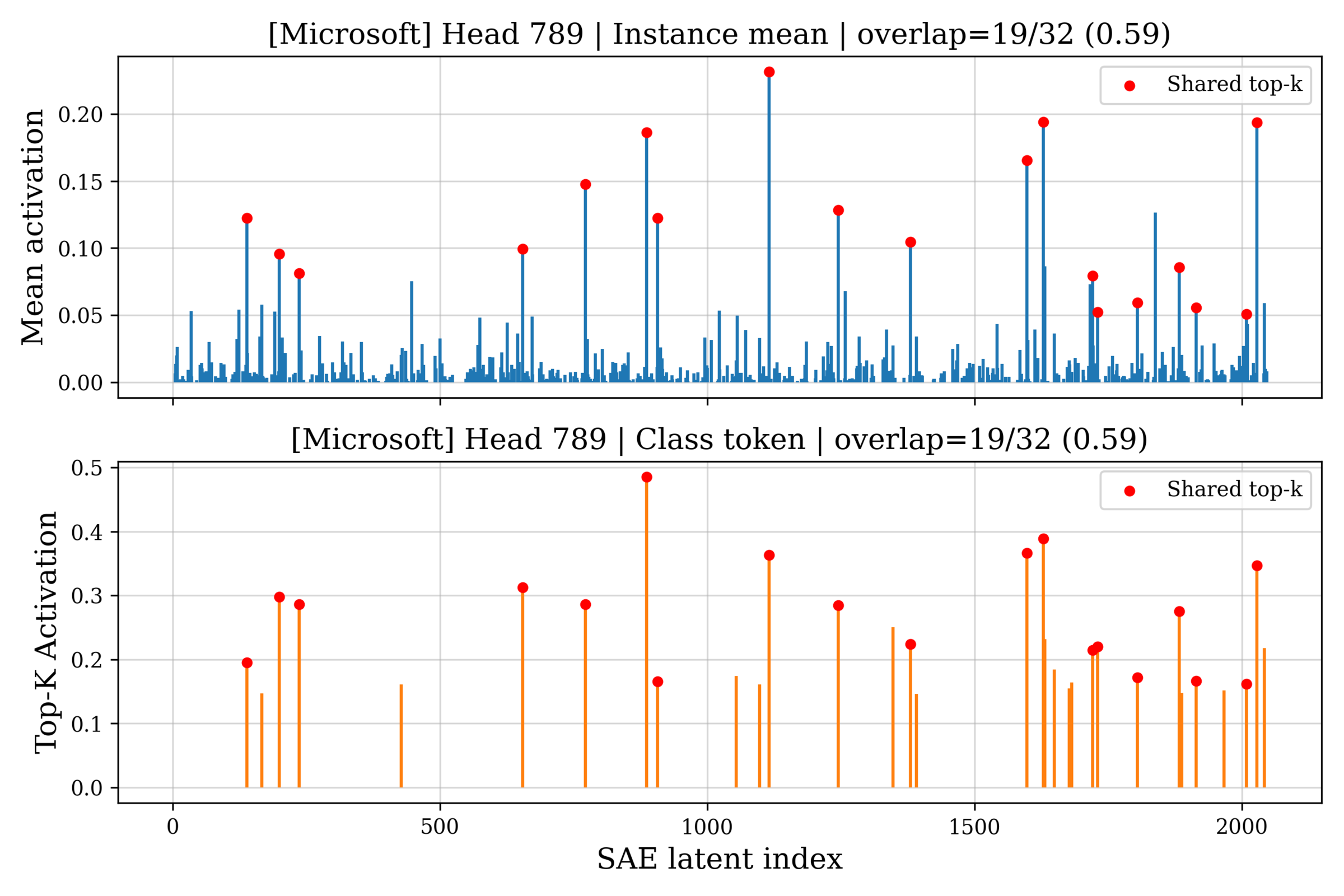}
\end{tabular}
\caption{Shared top-$k$ SAE latent activations between instance-mean and class-prototype representations for the Apple and Microsoft classes in Mistral, indicating head-specific sparse features aligned with company semantics.}
\label{fig:sae_mistral_companies_overlap}
\end{figure}

\begin{figure}[H]
\centering
\includegraphics[width=0.7\columnwidth]{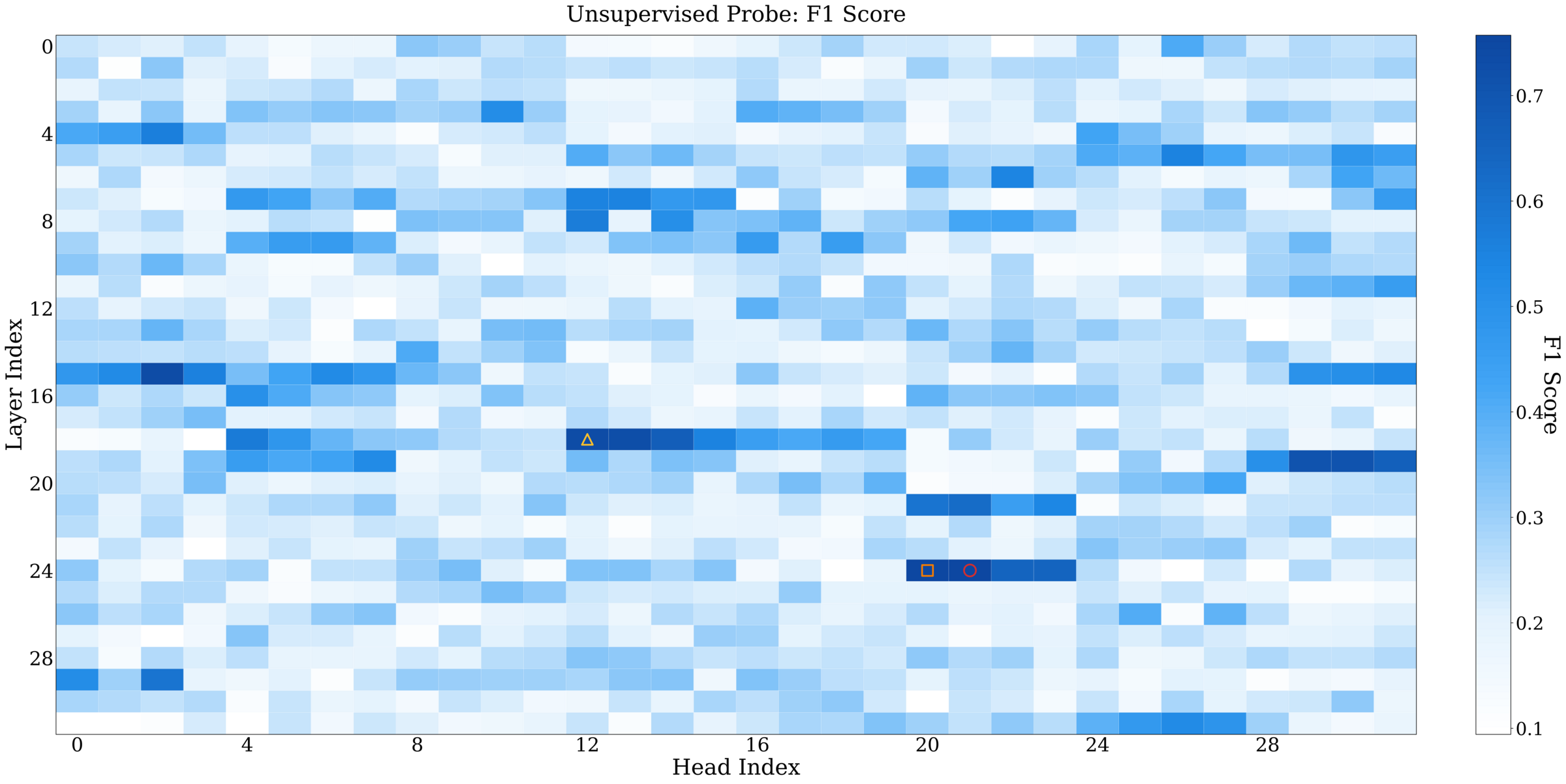}
\caption{Unsupervised head-level classification performance on the Companies task using SAE latent representations extracted from Mistral.}
\label{fig:sae_mistral_companies_head}
\end{figure}

% SAE -> gpt2 -> emotions
\begin{figure}[H]
\centering
\begin{tabular}{cc}
    \includegraphics[width=0.52\linewidth]{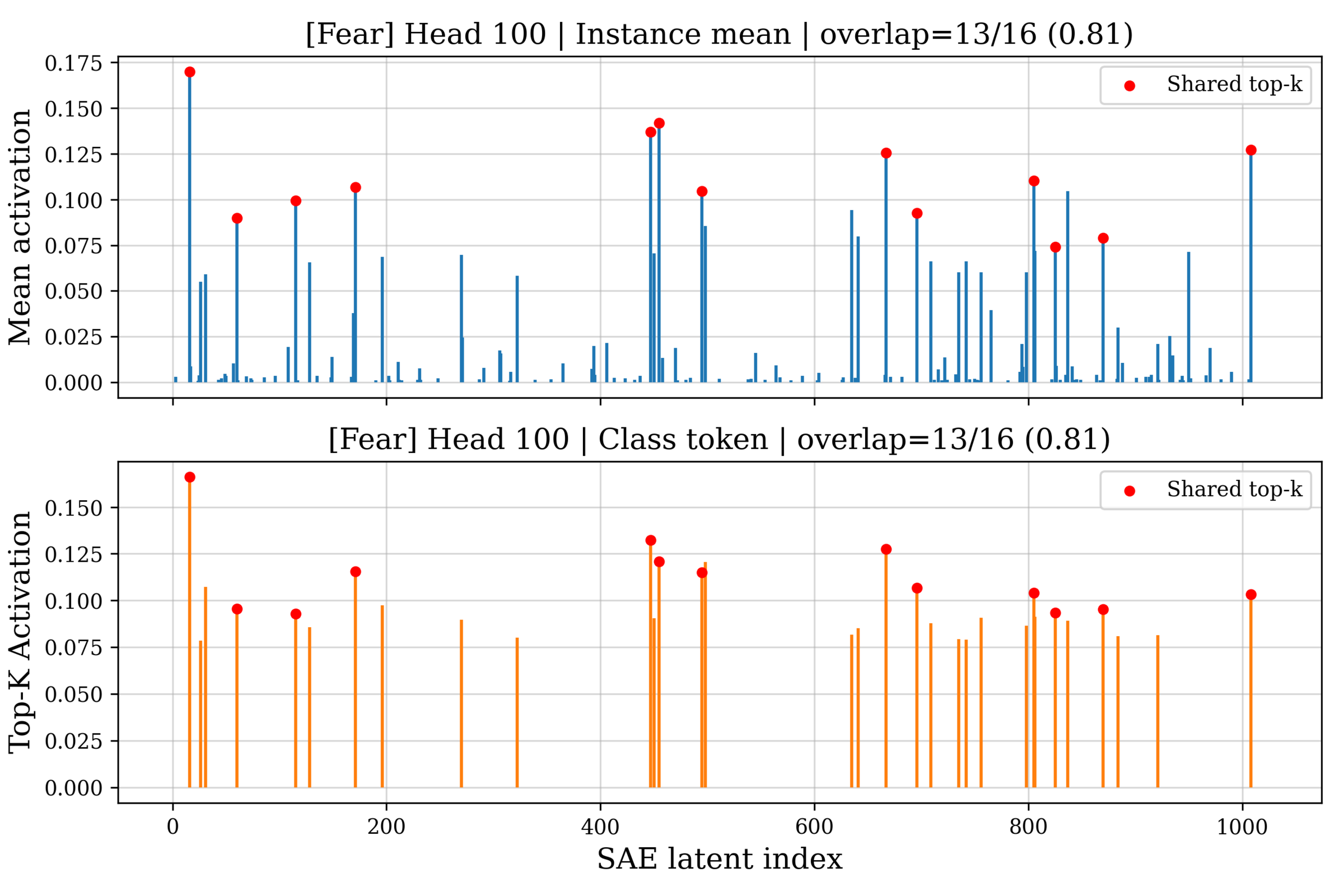} &
    \includegraphics[width=0.52\linewidth]{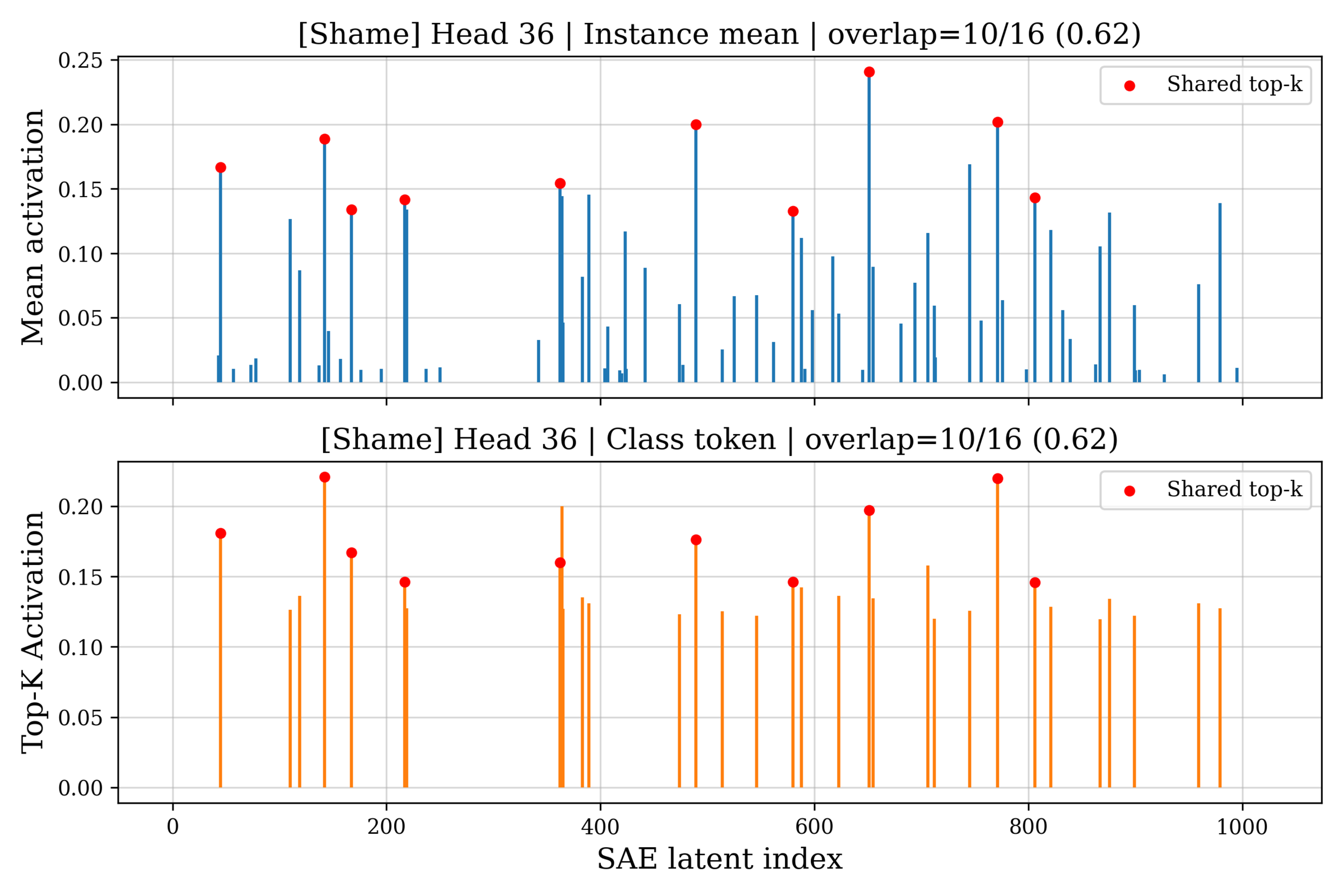}
\end{tabular}
\caption{Shared top-$k$ SAE latent activations between instance-mean and class-prototype representations for the Fear and Shame emotion classes in GPT-2.}
\label{fig:gpt2_emotion_overlap}
\end{figure}

\begin{figure}[H]
\centering
\includegraphics[width=0.7\columnwidth]{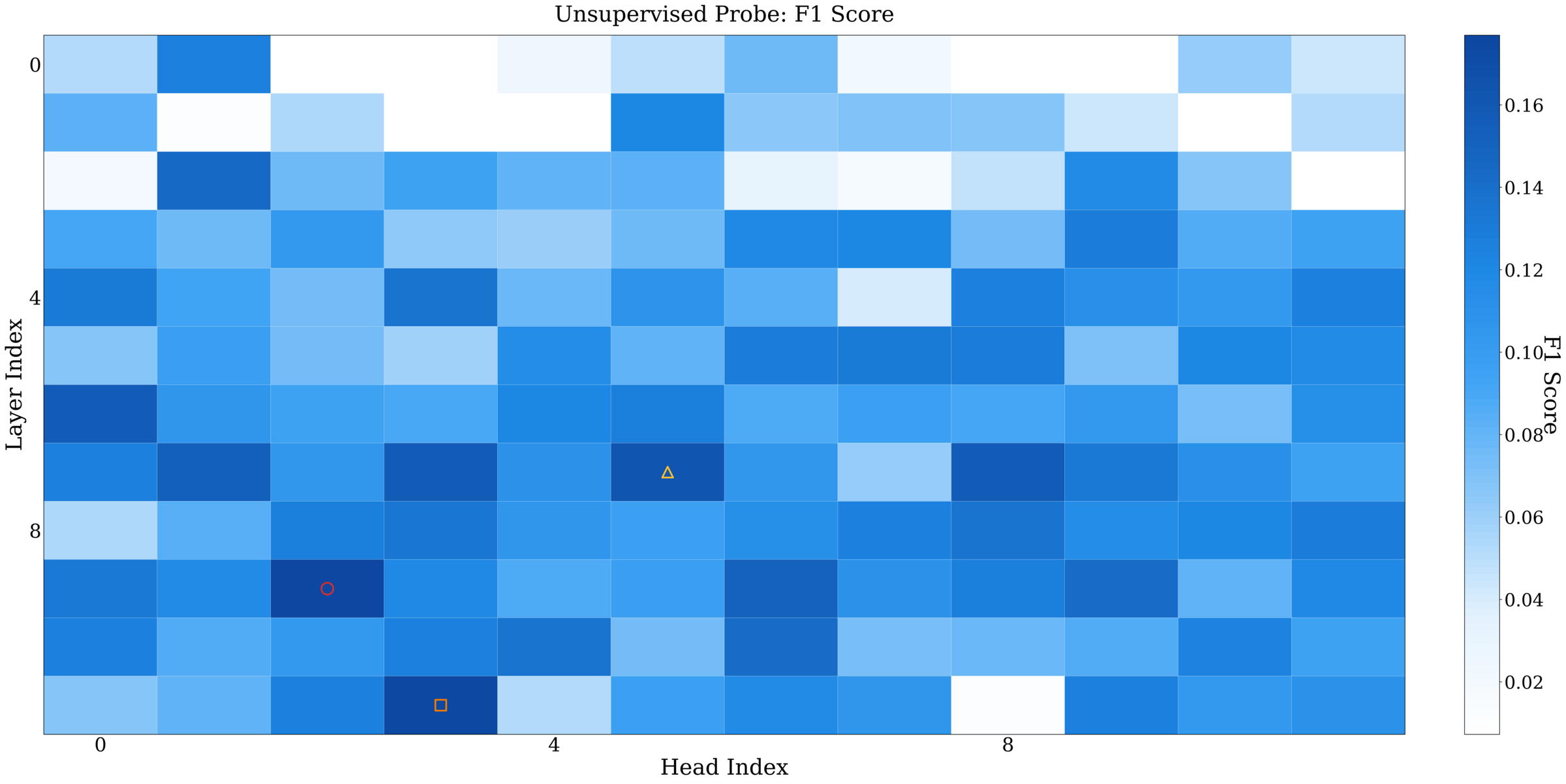}
\caption{Unsupervised head-level classification performance on the Emotions task using SAE latent representations extracted from GPT-2.}
\label{fig:gpt2_emotion_head}
\end{figure}

%%%%%%%%%%%%%%%%%%%%%%%%%%%%%%%%%%%%%%%%%%%%%%%%%%%%%%%%%%%%%%%%%%%%%%%%%%%%%%%
%%%%%%%%%%%%%%%%%%%%%%%%%%%%%%%%%%%%%%%%%%%%%%%%%%%%%%%%%%%%%%%%%%%%%%%%%%%%%%%
\subsection{Dataset Explanation}
\label{app:dataset}

We constructed several datasets to evaluate where semantic attributes are encoded in LLMs and whether semantic features exhibit this linear invariance.

\paragraph{Countries.} We analyze the spaces of all the countries in the world grouped by their respective continent.

\paragraph{Languages.} To evaluate cross-lingual representations, we employed the XQuAD dataset~\citep{artetxe2020cross}, a benchmark for cross-lingual question answering. We sampled 200 questions per language, covering English, Spanish, Russian, Hindi, German, and Mandarin Chinese.

\paragraph{Emotions.} We used a subset of the Emotion Cause dataset~\citep{ghazi2015detecting}, which includes 1,594 English sentences annotated with seven emotion labels (fear, sadness, anger, happiness, surprise, disgust, and shame). We sampled 600 examples for evaluation.

\paragraph{Cartoon Characters.} To probe stylistic and character-specific features, we constructed a dataset centered on fictional characters with distinctive linguistic patterns. For each of six characters---Elmer Fudd, Foghorn Leghorn, Jar Jar Binks, Porky Pig, Scooby-Doo, and Yoda---we collected 50 iconic phrases from publicly available sources.

\paragraph{Literary Authors.} We obtained 20 book quotes per author---William Faulkner, Gabriel Garc\'ia M\'arquez, Ernest Hemingway, Edgar Allan Poe, Virginia Woolf, William Shakespeare, and Mark Twain---from Goodreads.

\paragraph{Animals.} This dataset includes animal species names categorized by biological class: mammals, invertebrates, birds, amphibians, reptiles, and fish.

\paragraph{Fruits and Companies.} To test polysemy, we compiled a descriptive dataset where tokens like ``Apple'' belong to both the fruit and company categories, enabling evaluation of how context disambiguates shared representational directions. The company classes were ``Apple'', ``Google'', ``Microsoft'', and ``Amazon'', while the fruit classes were ``Apple'', ``Grape'', ``Orange'', and ``Banana''..

\end{document}